\useunder{\uline}{\ul}{}
\newtheorem{theorem}{Theorem}
\title{ForecastGrapher: Redefining Multivariate Time Series Forecasting with Graph Neural Networks}
\author{
    %Authors
    % All authors must be in the same font size and format.
    Wanlin Cai\textsuperscript{\rm 1},
    Kun Wang\textsuperscript{\rm 2},
    Hao Wu\textsuperscript{\rm 2},
    Xiaoxu Chen\textsuperscript{\rm 3},
    Yuankai Wu\textsuperscript{\rm 1 \thanks{Corresponding author}}
    \\
    \textsuperscript{\rm 1}Sichuan University \\
    \textsuperscript{\rm 2}University of Science and Technology of China\\
    \textsuperscript{\rm 3}McGill University\\
    caiwanlin@stu.scu.edu.cn, \{wk520529,wuhao2022\}@mail.ustc.edu.cn,\\
    xiaoxu.chen@mail.mcgill.ca, wuyk0@scu.edu.cn
}
\begin{document}

\maketitle

\begin{abstract}
The challenge of effectively learning inter-series correlations for multivariate time series forecasting remains a substantial and unresolved problem. Traditional deep learning models, which are largely dependent on the Transformer paradigm for modeling long sequences, often fail to integrate information from multiple time series into a coherent and universally applicable model. To bridge this gap, our paper presents ForecastGrapher, a framework reconceptualizes multivariate time series forecasting as a node regression task, providing a unique avenue for capturing the intricate temporal dynamics and inter-series correlations. Our approach is underpinned by three pivotal steps: firstly, generating custom node embeddings to reflect the temporal variations within each series; secondly, constructing an adaptive adjacency matrix to encode the inter-series correlations; and thirdly, augmenting the GNNs' expressive power by diversifying the node feature distribution. To enhance this expressive power, we introduce the Group Feature Convolution GNN (GFC-GNN). This model employs a learnable scaler to segment node features into multiple groups and applies one-dimensional convolutions with different kernel lengths to each group prior to the aggregation phase. Consequently, the GFC-GNN method enriches the diversity of node feature distribution in a fully end-to-end fashion. Through extensive experiments and ablation studies, we show that ForecastGrapher surpasses strong baselines and leading published techniques in the domain of multivariate time series forecasting. %Code is available at this repository: \url{https://anonymous.4open.science/r/ForecastGrapher}. 在附录里上传
\end{abstract}

\section{Introduction}

Multivariate time series forecasting is a critical component in predictive analytics, aiming to predict future values of interconnected time series based on their historical trends. Over the past decade, this intricate problem has been intensively tackled using various statistical and machine learning methods~\cite{hyndman2018forecasting}. Recently, Various deep learning models, including Transformer-based~\cite{zhou2021informer} and non-attention mechanisms like MLP~\cite{zeng2023transformers,yi2023frequency, 10.1145/3580305.3599533} and TCNs~\cite{wu2023timesnet}, have been proposed to address the challenges of time series forecasting, demonstrating competitive performance.
%Recently, transformers~\cite{vaswani2017attention} with attention mechanisms, which have garnered considerable success in long sequence modeling, have been introduced to the time series task and attained competitive performance~\cite{zhou2021informer}. Initial research primarily focused on utilizing attention mechanisms within Transformers to capture temporal dependencies in extensive time series data. Nevertheless, recent explorations, such as the work by \cite{zeng2023transformers}, are increasingly scrutinizing the actual efficacy of Transformer-based models in capturing these temporal dependencies. Consequently, several more sophisticated models that do not rely on attention mechanisms, using Multi-Layer Perceptron (MLP) or Temporal Convolutional Networks (TCNs), have been proposed, as noted by \cite{wu2023timesnet}, \cite{10.1145/3580305.3599533}, and \cite{yi2023frequency}, demonstrating competitive performance in comparison to Transformer-based forecasters.

Different structures process time series in various ways. Essentially, they all utilize neural networks to capture both inter-series and intra-series correlations (temporal correlations) in time series data~\cite{cai2023msgnet}. Earlier works often overlooked the inter-series correlation, treating all variables at the same time point as a single token. They employed Transformers~\cite{zhou2021informer, wu2021autoformer, zhou2022fedformer}, MLPs~\cite{zeng2023transformers,das2023long, yi2023frequency}, and TCNs~\cite{woo2021cost,yue2022ts2vec} to capture the temporal correlation across these tokens. However, the importance of inter-series correlation is equally significant. 

Recently, several studies~\cite{zhang2023crossformer,liu2023itransformer,cheng2023rethinking} have been exploring the effectiveness of Transformers in modeling inter-series correlations, rather than focusing solely on temporal correlations. For example, iTransformer~\cite{liu2023itransformer} reconceptualizes individual time series as distinct tokens and employs self-attention to capture inter-series correlations between these tokens. This approach demonstrates that Transformers are more adept at modeling inter-series correlations than temporal correlations.\begin{wrapfigure}[16]{r}{8.5cm}
\centering % Centers the entire figure
  \includegraphics[width=0.6\textwidth]{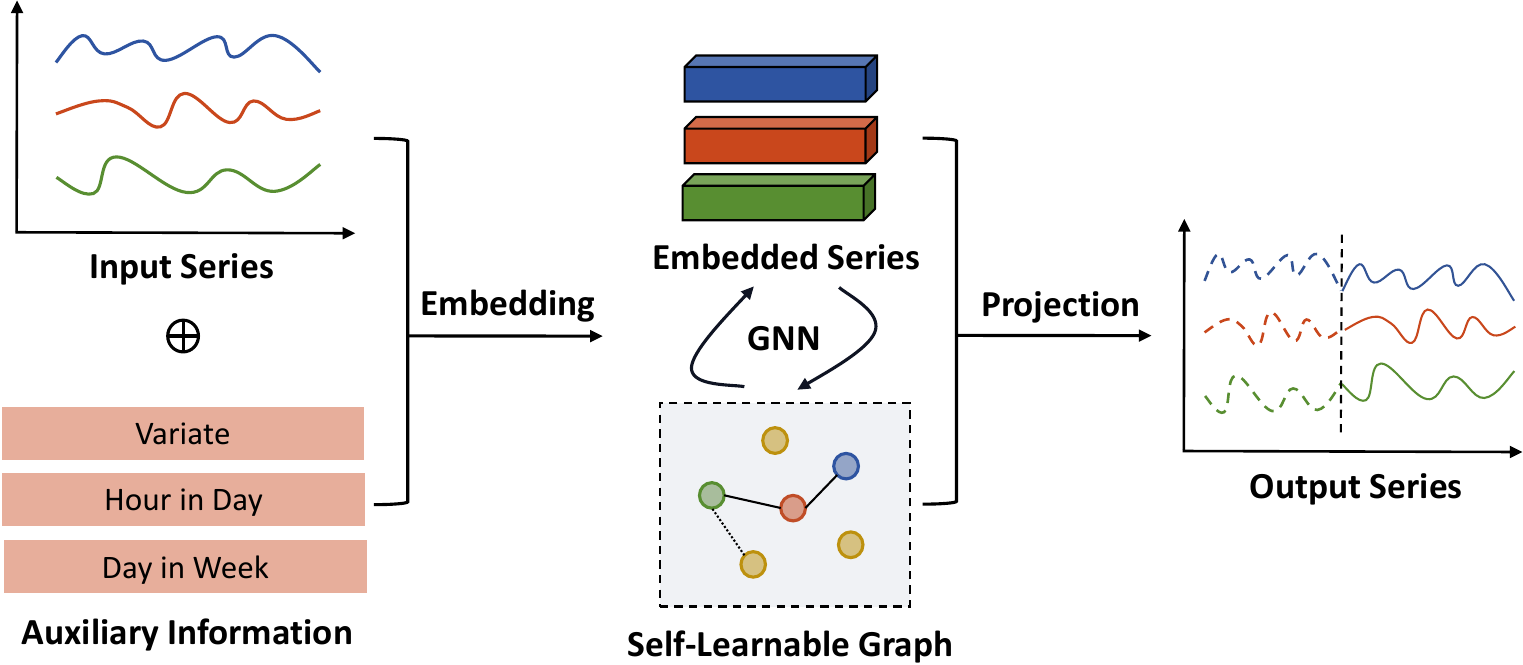} 
  \caption{In ForecastGrapher, each variate is treated as a node within a graph, transforming the multivariate time series forecasting problem into a node regression task.}
  \label{fig:model}
\end{wrapfigure}

The use of Transformers to model inter-series correlations is actually quite similar to graph structure learning in GNNs. This is because the attention mechanisms in Transformers, resembling GNNs' neighborhood aggregation, can be seen as employing dynamic adjacency matrices~\cite{joshi2020transformers,velivckovic2023everything}. This similarity highlights the intriguing parallels and potential for applying GNNs in areas traditionally dominated by Transformers, especially in multivariate time series analysis. At this point, we ask the following important question \-- \textit{Can GNNs models yield superior performance for multivariate time series forecasting, and if so, what adaptations are necessary for multivariate time series data?}

Towards this, we introduce \textbf{ForecastGrapher}, a GNN architecture with strong expressive power tailored for precise multivariate time series forecasting. Figure~\ref{fig:model} illustrates the ForecastGrapher's approach to multivariate time series forecasting. ForecastGrapher conceptualizes each input time series as a graph node. Initially, it employs embedding techniques to encode the temporal variations of individual time series into a high-dimensional space. Subsequently, ForecastGrapher features self-learning graph structures to discern inter-series correlations among nodes. Ultimately, the forecasting results are generated by the form of node regression task after several layers of feature aggregation.

In addition to restructuring the forecasting problem as a node regression task, we also focus on the expressive power issue of GNNs. Notably, although GNNs are widely used in spatio-temporal forecasting (which can be understood as a special type of multivariate time series forecasting problem, mainly focusing on short-term predictions, such as 12-step ahead prediction)~\cite{jin2023spatio}, the inherent limitations in their expressive power remain largely unaddressed in this field. It is well-known that typical GNNs, including Graph Convolutional Networks (GCNs) and Graph Attention Networks (GATs), exhibit limited expressiveness. Several studies~\cite{xu2018powerful,morris2019weisfeiler} have shown that GNNs cannot distinguish between certain pairs of graphs. For instance, %as illustrated in Figure~\ref{fig:gnn_limit}, 
typical GCNs using mean value aggregators are unable to differentiate between node feature distributions with the same mean value but different standard deviations~\cite{corso2020principal, bi2023mm}. This is particularly problematic for multivariate time series forecasting, as differing variances in the historical variables can lead to completely different future outcomes. To address this issue, we introduce a new GNN structure called Group Feature Convolution GNN (GFC-GNN) for ForecastGrapher. We employ learnable scalers to divide features into groups. Within these groups, we perform 1D convolutions across the feature dimension, using kernels of various lengths for differently scaled groups. As a result, GFC-GNN diversifies the distribution of node features in an end-to-end manner.

%\textbf{The key highlights of ForecastGrapher} is its novel approach to treating multivariate time series forecasting as a node regression problem within the graph learning domain. The process involves three crucial steps: 
%1) Generating appropriate node embeddings based on the temporal variation and other external information of individual time series;
%2) Learning an adaptive adjacency matrix for the GNNs;
%3) Enhancing the expressive power of the GNNs. 
%We conducted a comprehensive empirical analysis on twelve widely-used public datasets. The results indicate that ForecastGrapher, a model that effectively addresses these three aspects, consistently surpasses all existing benchmarks across the majority of these datasets. 

To summarize, the key contributions of our ForecastGrapher are outlined as follows:
\begin{itemize}
    \item Framework: We discovered that a GNN architecture designed for the node regression task can effectively address the challenges of multivariate time series forecasting. The key lies in how to learn the graph structure, and in the design of node embeddings and the GNN framework itself.
    \item Expressive Power: Our findings indicate that the application of 1D convolutional layers, with varying kernel lengths to the feature dimensions prior to node aggregation, can effectively diversify the distribution of node features. This enhancement in diversity significantly improves forecasting accuracy.
    \item Performance: ForecastGrapher delivers performance that is comparable to or surpasses state-of-the-art methods across twelve benchmark datasets for multivariate time series forecasting.
\end{itemize}

\section{Related Work}

\subsection{Backbone Networks Used in Time Series Forecasting}

The dominant architecture for time series forecasting has traditionally been the Recurrent Neural Network (RNN)~\cite{salinas2020deepar,rangapuram2018deep, wang2019deep}, the representative work is the introduction of DeepAR~\cite{salinas2020deepar}. Another early representative network for time series forecasting is the Convolutional Neural Network (CNN), with the earliest work being Temporal Convolutional Network (TCN)~\cite{bai2018empirical}, a type of CNN that does not leak future values. CNNs have been successfully utilized in time series forecasting, with notable works including SCInet~\cite{liu2022scinet} and TimesNet~\cite{wu2023timesnet}. Since 2019, the Transformer has been introduced for multivariate time series forecasting, specifically for long sequence modeling~\cite{li2019enhancing,zhou2021informer}.  Subsequently, numerous Transformer variants have been developed to enhance performance in visual tasks. Key advancements include channel dependence~\cite{zhang2023crossformer, liu2023itransformer}, time series patches~\cite{Yuqietal-2023-PatchTST}, and the incorporation of frequency domain information~\cite{zhou2022fedformer,wu2021autoformer}. MLPs have also been explored in the context of time series forecasting~\cite{zeng2023transformers}.
With specially designed modules~\cite{das2023long, yi2023frequency, 10.1145/3580305.3599533, xu2023fits}, MLP can achieve competitive performance. However, most of those methods have not explicitly modeled the inter-series correlation between different variables. Some channel-dependent methods use the attention mechanism to capture relationships between variables, but the correlations obtained often change with time series fluctuations~\cite{zhang2023crossformer, liu2023itransformer}, lacking interpretability.

\subsection{Graph Neural Networks}

The earliest graph neural networks were initially outlined in \cite{gori2005new, scarselli2008graph}. In recent years, a variety of GNN variants have been introduced~\cite{kipf2016semi, atwood2016diffusion, niepert2016learning, gilmer2017neural, velickovic2018graph, abu2019mixhop, corso2020principal}. GNNs are typically applied to data with graph structures, such as social networks~\cite{hamilton2017inductive}, citation networks~\cite{sen2008collective} and biochemical graphs~\cite{wale2008comparison}. Despite their empirical successes in these fields, \cite{xu2018powerful} and
\cite{morris2019weisfeiler} demonstrated that GNNs cannot distinguish some pairs of graphs. To address this limitation, several studies have utilized hand-crafted aggregators to enhance the expressive power of GNNs~\cite{corso2020principal, dehmamy2019understanding, ma2022meta}.

The applications of GNNs in the field of spatio-temporal forecasting involve using GNNs to model spatial attributes, followed by the use of other modules to model temporal attributes. This approach has led to the development of a new Spatio-Temporal GNN (STGNN) structure~\cite{jin2023spatio}. For example, models such as DCRNN~\cite{li2018diffusion}, ST-MetaNet~\cite{pan2019urban}, and AGCRN~\cite{bai2020adaptive} combine GNNs with recurrent neural networks for their operations. Similarly, Graph WaveNet~\cite{wu2019graph}, MTGNN~\cite{wu2020connecting}, and StemGNN~\cite{cao2020spectral} incorporate CNNs for temporal modeling. Additionally, the attention mechanism has become a widely used technique in STGNNs~\cite{guo2019attention,zheng2020gman}. However, these methods only address forecasting problems where both the input and output sequences are short, and they give little consideration to the expressive power of GNNs. Yi et al.~\cite{yi2023fouriergnn} provided a purely GNN-based perspective on multivariate time series forecasting, treating both time points and variates as nodes within a graph. However, their approach is primarily suited to short-term forecasting; for long-term forecasting, treating time points as nodes becomes impractical due to computational complexity.

\section{Methodology}

\subsection{Problem Formulation}

In multivariate time series forecasting, given historical observations $\mathbf{X}_t = \{\mathbf{x}_{t-h}, \ldots, \mathbf{x}_{t-1}\} \in \mathbb{R}^{N \times h}$ with $h$ time steps and $N$ variates, we predict the future $S$ time steps $\mathbf{Y}_t = \{\mathbf{x}_{t}, \ldots, \mathbf{x}_{t+S-1}\} \in \mathbb{R}^{N \times S}$. For convenience, we denote $\mathbf{x}_{t} \in \mathbb{R}^N$ as the time series data collected at time point $t$. Furthermore, we denote $\mathbf{X}_{t,n} \in \mathbb{R}^h$ as the complete time series of the variate indexed by $n$, collected from time point $t-h$ to $t-1$.

To generate $\mathbf{Y}_t$, we conceptualize the generation process as a node regression problem within the context of graph data. Specifically, each $\mathbf{X}_{t,n}$ in the input $\mathbf{X}_t$ is treated as a dynamic feature of node $n$ in a graph. We assume the existence of additional auxiliary node features $\mathbf{E}_{t, n}$. Consequently, the multivariate time series forecasting problem can be concisely formulated as follows:
\begin{equation}
\begin{split}
\mathbf{h}^0_{t,n} = &\text{Embedding}\left(\mathbf{X}_{t,n}, \mathbf{E}_{t,n}\right), \quad n=1,\ldots, N, \\
\mathbf{H}^{l+1}_{t} =& \text{GNN}\left(\mathbf{H}^{l}_{t}, \mathbf{A}^{l}\right), \quad l=1,\ldots, L, \\
\mathbf{Y}_{t, n} =& \text{Projection}\left(\mathbf{h}^L_{t,n}\right).
\end{split}
\end{equation}
Here, $\mathbf{H}_t = \{\mathbf{h}_{t,1}, \ldots, \mathbf{h}_{t,N}\} \in \mathbb{R}^{N \times D}$ represents $N$ node embeddings, each of dimension $D$, where the subscript $t$ denotes the time step and the superscript $l$ refers to the layer index. The term $\mathbf{A}$ represents the self-learned adjacency matrix. The whole ForecastGrapher framework is illustrated in Figure~\ref{fig:overall_model}.
In subsequent sections, we provide a detailed introduction to the design principles of $\text{Embedding}$ layer, $\text{GNN}$ layer, and $\text{Projection}$ layer. 

%Our ForecastGrapher framework, illustrated in Figure~\ref{fig:overall_model}, adheres to the node regression paradigm prevalent in GNNs. It incorporates key elements like node embedding, node projection, and GNN blocks with augmented expressive capabilities. Subsequent sections provide a detailed analysis of each component within the ForecastGrapher framework.

\begin{figure}[tb!]
\centerline{\includegraphics[width=0.8\linewidth]{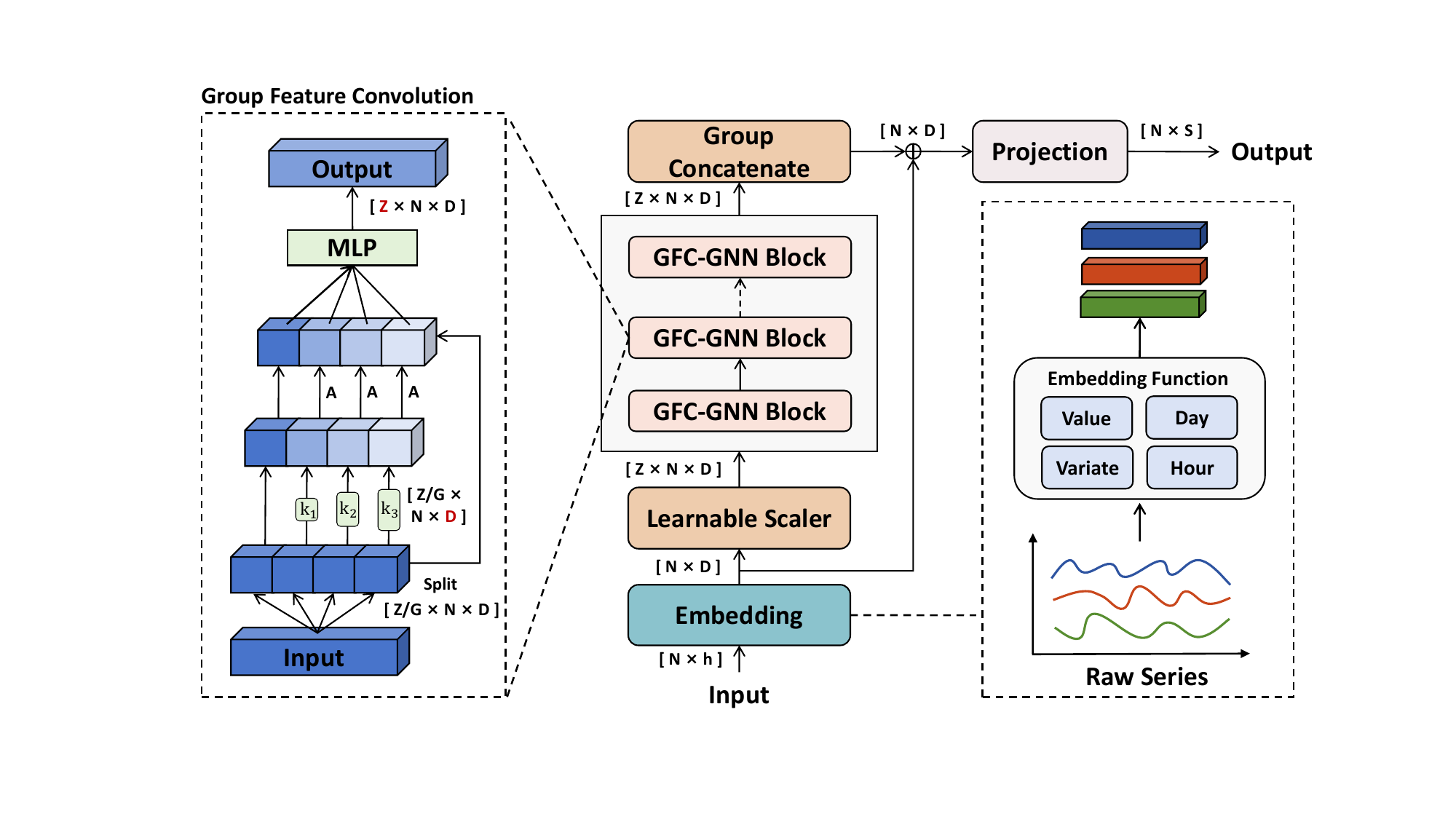}}
\caption{The overall structure of ForecastGrapher is designed to address a node regression task. The model considers each time series as a node and generates a corresponding node embedding. Next, it employs learnable scalers to partition the node embedding into multiple groups. Subsequently, several layers of GFC-GNN are stacked (the red color indicates the dimension to which the corresponding neural networks are applied). Finally, ForecastGrapher utilizes node projection for forecasting.}
\label{fig:overall_model}
\end{figure}

\subsection{Embedding the Time Series}

In our study, we treat each single variate as a node within a graph. The initial step involves integrating the temporal dynamics of time series into our node embeddings. There are many architectures like Temporal Convolutional Networks (TCN) and Transformers for this purpose.  Surprisingly, simpler linear models have demonstrated superior performance in capturing these temporal patterns~\cite{zeng2023transformers}. Therefore, in this research, we utilize a straightforward linear model to create embeddings that accurately reflect the temporal changes in individual time series. 

In various forecasting contexts, dynamic and static covariates that are known in advance play a significant role. Key among these are indicators related to "where" and "when."~\cite{shao2022spatial}. For instance, global covariates, common to all time series like time of day and day of the week, or specific ones such as a sensor's location, are crucial. In the datasets examined in our study, we have included three additional embeddings (variate, hour and day) to enrich the node embeddings. These embeddings are designed to capture and incorporate these essential covariate aspects effectively. In summary, the node embedding $\mathbf{h}^0_{t,n}$ is calculated by the following equation:
\begin{equation}
    \mathbf{h}^0_{t,n} = \text{Linear}(\mathbf{X}_{t,n}) + \mathbf{e}^{\text{variate}}_n + \mathbf{e}^{HiD}_{\pi(t)} + \mathbf{e}^{DiW}_{\pi(t)},
\end{equation}
where  $\text{Linear}$ denotes a straightforward linear layer. The term $\mathbf{e}^{\text{variate}}_n \in \mathbb{R}^{D}$ represents a learnable embedding associated with the $n$-th variate, $\mathbf{e}^{HiD}_{\pi(t)} \in \mathbb{R}^{D}$ and $\mathbf{e}^{DiW}_{\pi(t)} \in \mathbb{R}^{D}$ are learnable embeddings for Hour in Day and Day in Week, respectively, $\pi$ indicates the use of the corresponding granularity timestamp as the index for these temporal embedding matrices (In the experiments, if the data lacks "Hour in Day" and "Day in Week" information, we will not utilize these information).

\subsection{Graph Neural Networks}

\subsubsection{Self-Learnable Adjacency Matrix}

Defining an adjacency matrix for target time series is often challenging, and there are various methods to learn one from data. In our ForecastGrapher model, we employ the popular and straightforward method proposed by \cite{wu2019graph}. This involves two trainable parameters, $\mathbf{E}^l_1$ and $\mathbf{E}^l_2 \in \mathbb{R}^{N \times c}$, representing the source and target nodes respectively. The adjacency matrix is computed as follows:
\begin{equation}
\mathbf{A}^l = \text{SoftMax}\left(\text{ReLU}\left(\mathbf{E}^l_1 \left(\mathbf{E}^l_2\right)^T\right)\right),
\end{equation}
using the ReLU activation function to prune weak connections and the SoftMax function to normalize the adjacency matrix of the graph. It's important to highlight that in ForecastGrapher, we learn a new adjacency matrix at each layer. This strategy is implemented with the aim of capturing varying inter-series correlations across different layers.

\subsubsection{Learnable Scaler and Group Feature Convolution}

Departing from traditional human-crafted aggregators and scalers~\cite{corso2020principal}, our approach to enhancing the expressive power of GNNs involves a pure end-to-end strategy. This is inspired by the Bayesian neural network perspective of Convolutional Neural Networks (CNNs), which suggests that CNNs can autonomously modify the distribution of features~\cite{xiao2018dynamical,novak2018bayesian}. This insight forms the basis of our intuition that an automatic adjustment of feature distributions can be achieved within CNNs. We visualize the GFC mechanism in Figure~\ref{fig:GFC}.
\begin{wrapfigure}[16]{r}{7.5cm}
\centerline{\includegraphics[width=0.5\textwidth]{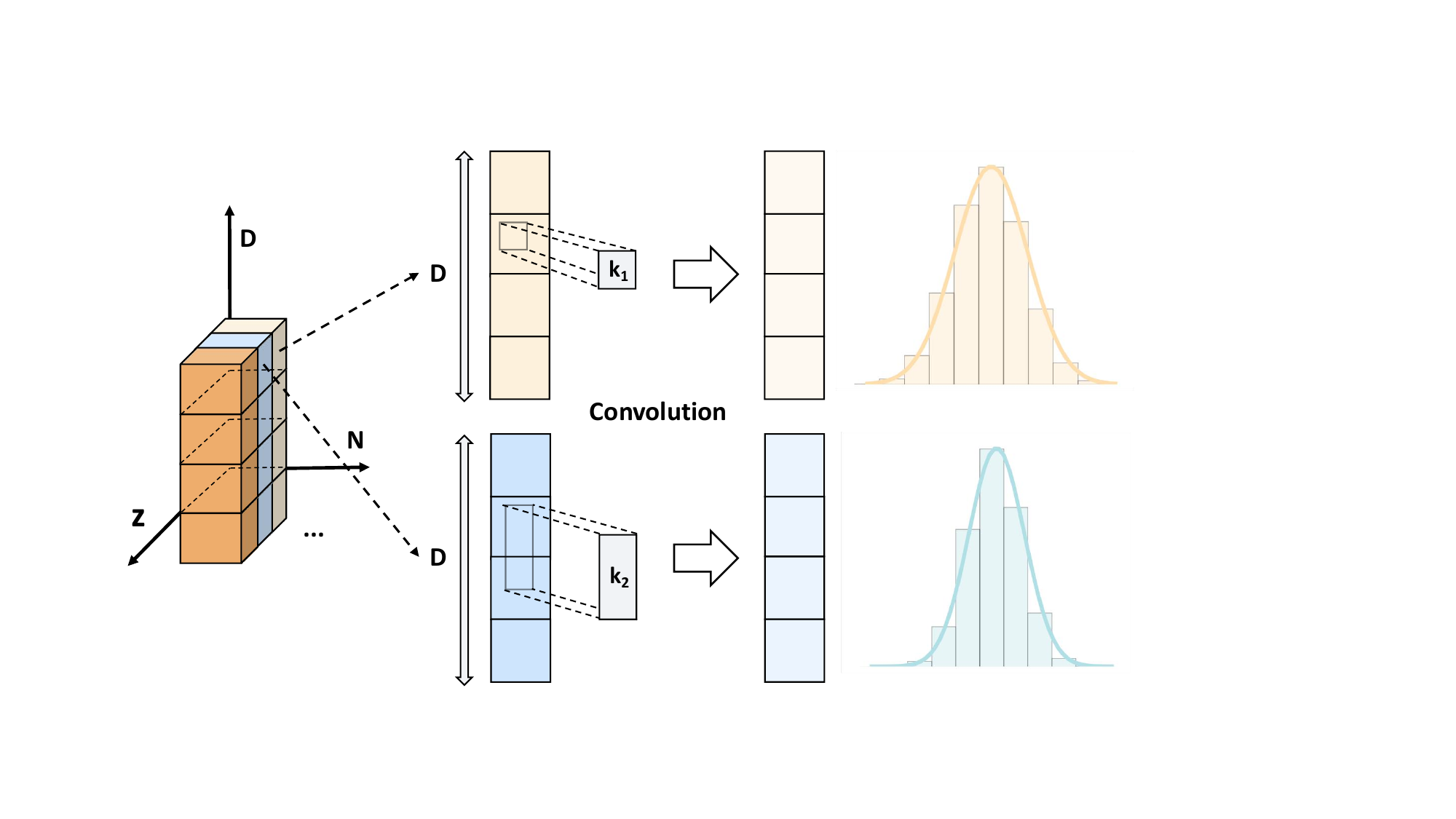}}
\caption{The GFC mechanism enhances the diversity of node embedding distributions: Convoluting the node feature with two distinct kernel lengths results in two distinct distributions.}
\label{fig:GFC}
\end{wrapfigure}

Initially, we augment the initial embedding $\mathbf{H}^0_{t}$ by multiplying it with $z$ learnable scalers. This process results in $\hat{\mathbf{H}}^0_{t} \in \mathbb{R}^{z \times N \times D}$, achieved by introducing a new dimension. In each GNN layer, we partition $\hat{\mathbf{H}}_{t}$ into $G$ groups, resulting in $\hat{\mathbf{H}}_{g,t} \in \mathbb{R}^{\frac{z}{G} \times N \times D}$, as illustrated in Figure~\ref{fig:overall_model}. Here, $g$ signifies the $g$-th group. For an integer division scenario, where $z \, \text{mod} \, G$ equals 0, we simply split into equal segments. Otherwise, we allocate $\text{int}\left(\frac{z}{G}\right)+z \, \text{mod} \, G$ dimensions to the first group, while maintaining $\text{int}\left(\frac{z}{G}\right)$ for the others. Subsequently, one-dimensional convolutions are applied on the feature dimension of $\hat{\mathbf{H}}_{g,t}$. To diversify the feature distribution of each group, different convolutional kernel sizes are used for each group, and one group is left unchanged without convolution. The formulation of GFC-GNN can be articulated as follows:
\begin{equation}
    \begin{split}
    \mathbf{T}^l_{g,t} &= \text{Conv1d}_g\left(\hat{\mathbf{H}}^l_{g,t}, k_g\right), \quad g = 2, \ldots, G, \\
    \mathbf{V}^l_{g,t} &=  \mathbf{A}^l \hat{\mathbf{T}}^l_{g,t}, \quad g = 2, \ldots, G, \\
    \mathbf{H}^{l+1}_{t} &= \text{MLP} \left( \hat{\mathbf{H}}^l_{1,t} \mid \mathbf{V}^l_{2,t} \mid \ldots \mid \mathbf{V}^l_{G,t} \right),
    \end{split}
    \label{eq:gfc}
\end{equation}
where $\text{Conv1d}_g$ represents the 1DCNN for the $g$-th group in the feature dimension, $k_g$ is the kernel length of the $g$-th group and $\text{MLP}$ denotes the multi-layer perceptron that is applied to the concatenated outputs of all groups. Notably, each 1DCNN is characterized by different kernel lengths, allowing for varied and specialized processing across different groups. We provide a theoretical analysis using Monte Carlo sampling to demonstrate that GFC-GNN can transform distributions with the same mean but different variances into distributions with different means, and it can generate diverse feature distributions, as shown in the Appendix~\ref{appen:mc}.

\subsection{Combining Groups and Generating Forecast Results}

After undergoing $L$ layers of graph convolution, we obtain the grouped representation $\mathbf{H}^{L-1}_{t}$ . To generate the forecast results via node regression, it's necessary to amalgamate the representations of the $G$ groups. To ensure that the representations of each group are fully utilized, we employ a learnable concatenation method. Additionally, we use a residual learning approach~\cite{he2016deep} to fuse the final representation with the initial representation. The finally representation is given by
\begin{equation}
    \mathbf{H}^{L}_{t} = \mathbf{H}^{0}_{t} + \mathbf{W}_{\text{concat}} \mathbf{H}^{L-1}_{t},
\end{equation}
where $\mathbf{W}_{\text{concat}} \in \mathbb{R}^{z \times 1}$ is the learnable weight matrix for group concatenation. 

Finally, the regression layer performs forecasting based on
\begin{equation}
    \mathbf{Y}_{t} =  \mathbf{H}^{L}_{t} \mathbf{W}_{\text{reg}} + \mathbf{b}_{\text{reg}},
\end{equation}
where $\mathbf{W}_{\text{reg}} \in \mathbb{R}^{D \times S}$ and $\mathbf{b}_{\text{reg}} \in \mathbb{R}^{S}$  are the learnable weights and bias, respectively.

\section{Experiments}
\subsection{Experimental Setup}
\paragraph{Datasets} We evaluate the performance of ForecastGrapher on twelve widely used datasets. These include ETT (h1, h2, m1, m2)~\cite{zhou2021informer}, Electricity, Exchange~\cite{lai2018modeling}, Traffic, Weather, and PEMS (03, 04, 07, 08), as evaluated in \cite{liu2023itransformer, liu2022scinet}. Additional information about these datasets can be found in the Appendix \ref{dataset_appendix}.

\paragraph{Baselines} We have selected several well-known forecasting models as our benchmarks, including \textbf{\emph{(\romannumeral1) Transformer-based models:}} iTransformer~\cite{liu2023itransformer}, PatchTST~\cite{Yuqietal-2023-PatchTST}, Crossformer~\cite{zhang2023crossformer}; \textbf{\emph{(\romannumeral2) MLP-based models:}} DLinear~\cite{zeng2023transformers}, RLinear~\cite{Li2023RevisitingLT}; \textbf{\emph{(\romannumeral3) TCN-based models:}} TimesNet~\cite{wu2023timesnet}, SCINet~\cite{liu2022scinet}. \textbf{\emph{(\romannumeral4) GNN-based models:}} FourierGNN\cite{yi2023fouriergnn}, StemGNN\cite{cao2020spectral}. Additionally, we include a Naive method, which repeats the last 24 values in the review window. %we use GCN~\cite{kipf2016semi}, GAT~\cite{velickovic2018graph}, and Mixhop~\cite{abu2019mixhop} to replace GFC-GNN in ForecastGrapher to validate our design. 

\paragraph{Parameter Settings} We maintained identical dataset partitioning and historical window length with $h=96$ following \cite{liu2023itransformer}. The prediction window length $S$ is set within $\{96,192,336,720\}$ or $\{12,24,48,96\}$. For the PEMS dataset, we selected $\{12,24,48,96\}$ time steps as the long-term prediction length, in contrast to short-term traffic forecasts, e.g., $\{3,6,12\}$ in traditional studies~\cite{li2018diffusion}. The batch size is fixed at $batch=32$, though it is reduced to $16$ in cases of insufficient memory. We limit the number of training epochs to $epochs=10$, using mean squared error (MSE) as the training loss function. Additional details on experimental parameters are provided in the Appendix \ref{app:Settings and Hyperparameters}. For the hardware, we employ 4 RTX 4090 24GB GPUs for our experiments.

\subsection{Forecasting Results}
The main prediction results are shown in Table~\ref{tab:avg_results}, and we compare the performance with the benchmarks using average MSE and MAE of all output lengths, which the lower, the better. The outcomes reveal that ForecastGrapher demonstrates outstanding performance across all datasets. \begin{wrapfigure}[14]{r}{7cm}
\centerline{\includegraphics[width=0.5\textwidth]{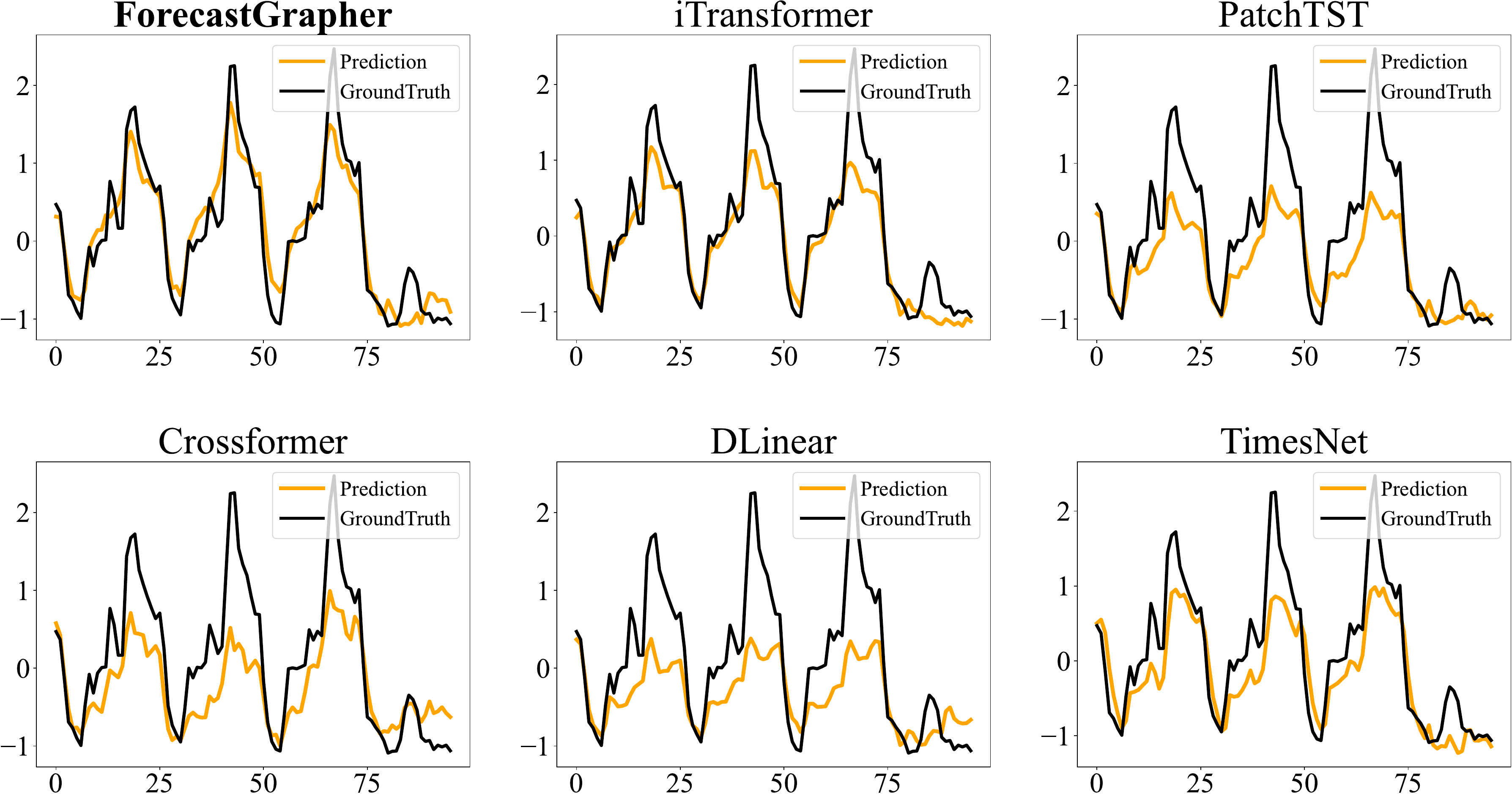}}
\caption{Visualization of input 96 and output 96 prediction results on the Electricity dataset.}
\label{fig:cases}
\end{wrapfigure}Specifically, ForecastGrapher achieves the top spot in terms of MSE and MAE a total of 16 times. Compared with the recent SOTA iTransformer, the error of ETT, Electricity, Weather, PEMS datasets significantly decreased by $4.21\%$, $7.25\%$, $4.47\%$ and $14.65\%$ respectively. Specifically, ForecastGrapher has demonstrated superior performance compared to other methods, notably iTransformer, on high-dimensional datasets such as Electricity and PEMS. iTransformer leverages the Transformer model to capture inter-series correlations by essentially creating a dynamic graph structure. Our research, however, suggests that GNNs utilizing static graph structures are more effective at capturing these inter-series correlations. Contrary to what one might intuitively expect, dynamic inter-series correlations may not provide benefits for high-dimensional, long-term forecasting tasks. To enable an intuitive comparison, we selected representative models for visualization on the Electricity dataset. As illustrated in Figure~\ref{fig:cases}, ForecastGraper exhibits a superior grasp of data fluctuations, surpassing other models in accuracy. When compared with other advanced GNNs and Naive method, ForecastGrapher also has a significant advantage in long-term prediction, as shown in Table \ref{tab:avg_results_gnn}.

\begin{table*}[htb!]
% \small
    \caption{Multivariate time series prediction results, with input length 96, output lengths in \{12,24,48,96\} for PEMS, \{96,192,336,720\} for others. Results are averaged from all output lengths. Use bold to indicate the best, and underline to indicate the second. The benchmarks are reported from \cite{liu2023itransformer}. Full results are listed in Appendix \ref{Full Forecasting Results}.}

    \centering
    \resizebox{1\linewidth}{!}{
    
    \begin{tabular}{cccccccccccccccccc}
\toprule
         \multicolumn{2}{c|}{Model}&  \multicolumn{2}{c}{\textbf{Ours}}&  \multicolumn{2}{c}{iTransformer}&  \multicolumn{2}{c}{PatchTST}&  \multicolumn{2}{c}{Crossformer}& \multicolumn{2}{c}{DLinear}& \multicolumn{2}{c}{RLinear} & \multicolumn{2}{c}{TimesNet}& \multicolumn{2}{c}{SCINet}\\
\midrule
         \multicolumn{2}{c|}{Metric}&  MSE&  MAE&  MSE&  MAE&  MSE&  MAE&  MSE& MAE& MSE& MAE& MSE& MAE& MSE& MAE& MSE& MAE\\
\toprule
         \multicolumn{2}{c|}{ETTm1}&  
\textbf{0.383} &  \textbf{0.397} &  0.407 &  0.410 &  {\ul 0.387} & {\ul 0.400} &  0.513 & 0.496 & 0.403 & 0.407 & 0.414 & 0.408 & 0.400 & 0.406 & 0.485 & 0.481 
\\
\midrule
         \multicolumn{2}{c|}{ETTm2}&  
\textbf{0.276} &  \textbf{0.323} &  0.288 &  0.332 &  {\ul 0.281} &  {\ul 0.326} &  0.757 & 0.610 & 0.350 & 0.401 & 0.286 & 0.327 & 0.291 & 0.333 & 0.571 & 0.537 
\\
\midrule

 \multicolumn{2}{c|}{ETTh1}& 
\textbf{0.437} & {\ul 0.437} & 0.454 & 0.447 & 0.469 & 0.454 & 0.529 & 0.522 & 0.456 & 0.452 & {\ul 0.446} & \textbf{0.434} & 0.458 & 0.450 & 0.747 &0.647 
\\
\midrule

 \multicolumn{2}{c|}{ETTh2}& 
\textbf{0.372} & {\ul 0.402} & 0.383 & 0.407 & 0.387 & 0.407 & 0.942 & 0.684 & 0.559 & 0.515 & {\ul 0.374} & \textbf{0.399} & 0.414 & 0.427 & 0.954 &0.723 
\\
\midrule

 \multicolumn{2}{c|}{Electricity}& 
\textbf{0.165} & \textbf{0.260} & {\ul 0.178} & {\ul 0.270} & 0.205 & 0.290 & 0.244 & 0.334 & 0.212 & 0.300 & 0.219 & 0.298 & 0.193 & 0.295 & 0.268 &0.365 
\\
\midrule

 \multicolumn{2}{c|}{Exchange}& 
0.367 & 0.407 & {\ul 0.360} & \textbf{0.403} & 0.367 & {\ul 0.404} & 0.940 & 0.707 & \textbf{0.354} & 0.414 & 0.378 & 0.417 & 0.416 & 0.443 & 0.750 &0.626 
\\
\midrule

 \multicolumn{2}{c|}{Traffic}& 
{\ul 0.458} & {\ul 0.292} & \textbf{0.428} & \textbf{0.282} & 0.481 & 0.304 & 0.550 & 0.304 & 0.625 & 0.383 & 0.626 & 0.378 & 0.620 & 0.336 & 0.804 &0.509 
\\
\midrule

 \multicolumn{2}{c|}{Weather}& 
\textbf{0.246} & \textbf{0.274} & {\ul 0.258} & {\ul 0.279} & 0.259 & 0.281 & 0.259 & 0.315 & 0.265 & 0.317 & 0.272 & 0.291 & 0.259 & 0.287 & 0.292 &0.363 
\\
\midrule

 \multicolumn{2}{c|}{PEMS03}& 
\textbf{0.098} & \textbf{0.205} & {\ul 0.113} & {\ul 0.221} & 0.180 & 0.291 & 0.169 & 0.281 & 0.278 & 0.375 & 0.495 & 0.472 & 0.147 & 0.248 & 0.114 &0.224 
\\
\midrule

 \multicolumn{2}{c|}{PEMS04}& 
{\ul 0.093} & {\ul 0.204} & 0.111 & 0.221 & 0.195 & 0.307 & 0.209 & 0.314 & 0.295 & 0.388 & 0.526 & 0.491 & 0.129 & 0.241 & \textbf{0.092} &\textbf{0.202} 
\\
\midrule

 \multicolumn{2}{c|}{PEMS07}& 
\textbf{0.079} &\textbf{0.172} & {\ul 0.101} & {\ul 0.204} & 0.211 & 0.303 & 0.235 & 0.315 & 0.329 & 0.395 & 0.504 & 0.478 & 0.124 & 0.225 & 0.119 &0.217 
\\
\midrule

 \multicolumn{2}{c|}{PEMS08}& 
\textbf{0.140} & \textbf{0.212} & {\ul 0.150} & {\ul 0.226} & 0.280 & 0.321 & 0.268 & 0.307 & 0.379 & 0.416 & 0.529 & 0.487 & 0.193 & 0.271 & 0.158 &0.244 
\\
\midrule
 \multicolumn{2}{c|}{$1^{st}$ Count}& \multicolumn{2}{c|}{\textbf{16}}& \multicolumn{2}{c|}{{\ul 3}}& \multicolumn{2}{c|}{0}& \multicolumn{2}{c|}{0}& \multicolumn{2}{c|}{1}& \multicolumn{2}{c|}{2}& \multicolumn{2}{c|}{0}& \multicolumn{2}{c}{2}\\

\bottomrule
    \end{tabular}

    }
    \label{tab:avg_results}
\end{table*}

\begin{table*}[htb!]
% \small
    \caption{Comparison with GNN and Naive method for multivariate time series prediction with input length 96. Results are averaged from all output lengths. Full results are listed in Appendix \ref{Full Forecasting Results}.}

    \centering
    \resizebox{1\linewidth}{!}{
    
    \begin{tabular}{cccccccccccccccc}
\toprule
         \multicolumn{2}{c|}{Dataset}&  \multicolumn{2}{c}{Electricity}&  \multicolumn{2}{c}{Traffic}&  \multicolumn{2}{c}{Weather}&  \multicolumn{2}{c}{PEMS03}& \multicolumn{2}{c}{PEMS04}& \multicolumn{2}{c}{PEMS07} & \multicolumn{2}{c}{PEMS08}\\
\midrule
         \multicolumn{2}{c|}{Metric}&  MSE&  MAE&  MSE&  MAE&  MSE&  MAE&  MSE& MAE& MSE& MAE& MSE& MAE& MSE& MAE\\
\toprule
         \multicolumn{2}{c|}{\textbf{Ours}}&  
\textbf{0.165} &  \textbf{0.260}
&  \textbf{0.458} &  \textbf{0.292 }
&  \textbf{0.246} & \textbf{0.274 }
&  \textbf{0.098} & \textbf{0.205 }
& \textbf{0.093} & \textbf{0.204 }
& \textbf{0.079} & \textbf{0.172 }
& \textbf{0.140} & \textbf{0.212} 
\\
\midrule
         \multicolumn{2}{c|}{FourierGNN}&  
0.228 &  0.324 
&  0.557 &  0.342 
&  0.249 &  0.302 
&  0.151 & 0.267 
& 0.180 & 0.294 
& 0.123 & 0.237 
& 0.216 & 0.312 
\\
\midrule

 \multicolumn{2}{c|}{StemGNN}& 
0.197 & 0.300 
& 0.612 & 0.356 
& 0.268 & 0.321 
& 0.187 & 0.302 
& 0.217 & 0.333 
& 0.184 & 0.289 
& 0.303 & 0.351 
\\

\midrule
 \multicolumn{2}{c|}{Naive}& 0.329 & 0.341 
& 1.156 & 0.477 
& 0.371 & 0.336 
& 0.901 & 0.703 
& 0.966 & 0.735 
& 0.966 & 0.720 
& 0.997 &0.745 
\\

\bottomrule
    \end{tabular}

    }
    \label{tab:avg_results_gnn}
\end{table*}

% \begin{figure}[tb]
% \centering % Centers the entire figure
% \begin{subfigure}[t]{0.49\textwidth} 
%   \centering % Centers the subfigure
%   \includegraphics[scale=0.11]{fig/cases.pdf} 
%   \caption{Visualization of input 96 and output 96 prediction results on the Electrcity dataset.}
%   \label{fig:cases}
% \end{subfigure}
% \hfill % Adds space between subfigures
% \begin{subfigure}[t]{0.49\textwidth}
%   \centering % Centers the subfigure
%   \includegraphics[scale=0.28]{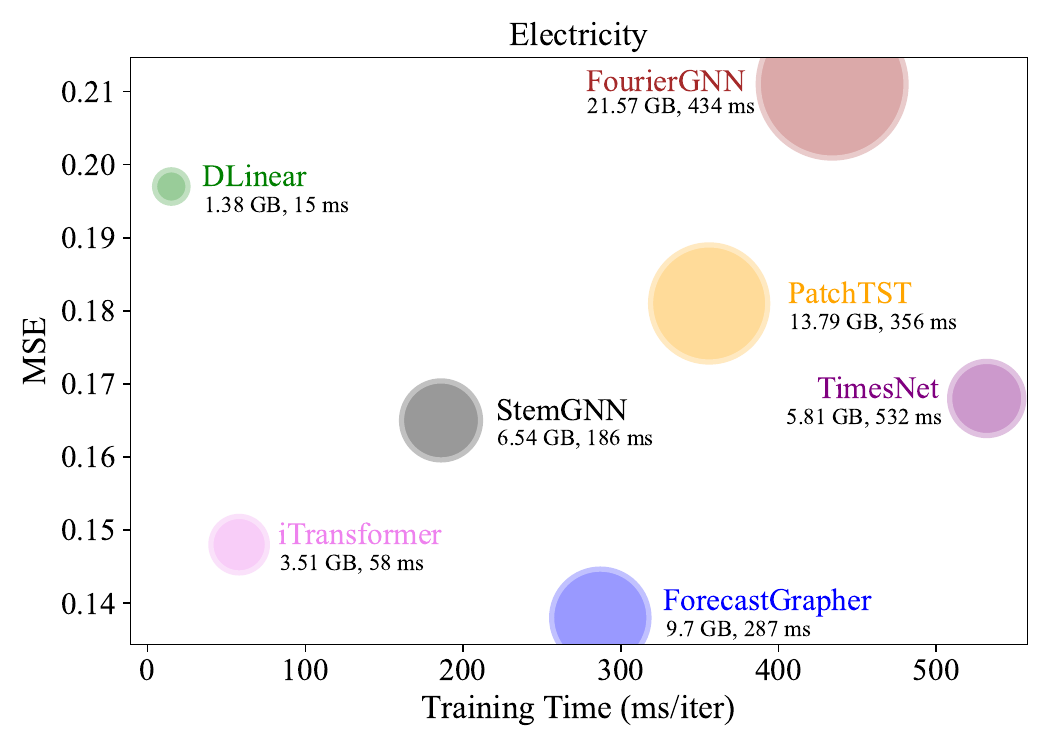} 
%   \caption{}
%   \label{fig:efficiency}
% \end{subfigure}

% \caption{The perspective of node regression in multivariate forecasting and the limitations in the expressive power of conventional GCN mean aggregators.}
% \label{fig:gnn_model}
% \end{figure}

% \begin{figure}[t]
% \centerline{\includegraphics[scale=0.5]{fig/efficiency.pdf}}
% \caption{}
% \label{fig:efficiency}
% \end{figure}

\subsection{Ablation on GNNs}
We substitute our uniquely designed GNNs with simpler alternatives such as GCN~\cite{kipf2016semi}, GAT~\cite{velickovic2018graph}, and Mixhop~\cite{abu2019mixhop}. Details on how to implement other GNN variants are provided in the Appendix~\ref{app:gnn}. Additionally, we assess the impact of the GFC methodology on enhancing the performance of these GNN models by comparing their performance with and without the GFC component. 

The results on the Exchange, Traffic and Weather datasets are given in Table~\ref{tab:GNN_ablation}. The results indicate that the GFC module typically enhances the performance of all GNN models, particularly with more complex datasets. For instance, on the Traffic dataset, removing the GFC module led to a 7.59\% increase in error (from 0.461 to 0.496). In contrast, for simpler datasets like the Exchange, the \begin{wraptable}[13]{r}{7.7cm}
\caption{Ablation Study on GNNs and GFC: The prediction results are averaged across all prediction lengths.}
\centerline{
\resizebox{1\linewidth}{!}{
\small
\tabcolsep=0.18cm
\renewcommand\arraystretch{0.95}
\begin{tabular}{ccccccc}
\toprule
\multicolumn{1}{c}{Dataset}                                               & \multicolumn{2}{c}{Exchange}                                             &
\multicolumn{2}{c}{Traffic}                                               &
\multicolumn{2}{c}{Weather}  \\ \midrule

\multicolumn{1}{c}{Metric}    & MSE     & MAE    & MSE    & MAE     & MSE    & MAE             \\
 \toprule

\multicolumn{1}{c|}{ForecastGrapher}&0.367 &0.407 &0.458 &0.292 &0.246 &0.274\\
\midrule

\multicolumn{1}{c|}{GCN}  & \multicolumn{1}{c}{0.376}          & \multicolumn{1}{c}{0.412}          & \multicolumn{1}{c}{\textbf{0.462}} & \multicolumn{1}{c}{\textbf{0.293}} & \multicolumn{1}{c}{\textbf{0.245}} & \multicolumn{1}{c}{\textbf{0.275}} \\

\multicolumn{1}{c|}{GCN-w/o-GFC}   & \multicolumn{1}{c}{\textbf{0.370}} & \multicolumn{1}{c}{\textbf{0.409}} & \multicolumn{1}{c}{0.487}          & \multicolumn{1}{c}{0.328}          & \multicolumn{1}{c}{0.253}          & \multicolumn{1}{c}{0.279}          \\
\midrule

\multicolumn{1}{c|}{GAT}  & \multicolumn{1}{c}{\textbf{0.369}} & \multicolumn{1}{c}{\textbf{0.408}} & \multicolumn{1}{c}{\textbf{0.461}} & \multicolumn{1}{c}{\textbf{0.303}} & \multicolumn{1}{c}{\textbf{0.246}} & \multicolumn{1}{c}{\textbf{0.274}} \\

\multicolumn{1}{c|}{GAT-w/o-GFC}   & \multicolumn{1}{c}{0.375}          & \multicolumn{1}{c}{0.410}          & \multicolumn{1}{c}{0.496}          & \multicolumn{1}{c}{0.340}          & \multicolumn{1}{c}{0.260}          & \multicolumn{1}{c}{0.283}          \\

\midrule

\multicolumn{1}{c|}{Mixhop} & \multicolumn{1}{c}{\textbf{0.367}} & \multicolumn{1}{c}{\textbf{0.408}} & \multicolumn{1}{c}{\textbf{0.451}} & \multicolumn{1}{c}{0.294}          & \multicolumn{1}{c}{\textbf{0.248}} & \multicolumn{1}{c}{\textbf{0.275}} \\

\multicolumn{1}{c|}{Mixhop-w/o-GFC}   & \multicolumn{1}{c}{0.372}          & \multicolumn{1}{c}{0.411}          & \multicolumn{1}{c}{0.451}          & \multicolumn{1}{c}{\textbf{0.290}} & \multicolumn{1}{c}{0.249}          & \multicolumn{1}{c}{0.276}         \\ 
\bottomrule
\end{tabular}
}
}

 \label{tab:GNN_ablation}
\end{wraptable}differences with and without the GFC module are relatively minor. Interestingly, Mixhop aggregates multiple representations by performing multi-hop neighbor aggregation at each layer, which is akin to grouping using multi-hop neighbors at every layer. The performance does not fluctuate significantly after removing the GFC, suggesting that the grouping mechanism of GFC may not be fully compatible with Mixhop's approach of aggregating multi-hop neighbors. This discrepancy warrants further in-depth research in the future. In summary, all variants of GNNs demonstrate competitive performance, indicating that using GNNs for node regression is highly suitable for addressing multivariate time series forecasting problems.

\subsection{Ablation on Inter-series Correlation Learning Mechanism}

Our model primarily relies on self-learnable variate embeddings and a self-learnable adjacency matrix to capture inter-series correlations. The PEMS datasets provide adjacency matrices associated with actual distances, offering us a valuable opportunity to validate the effectiveness of the self-learnable adjacency matrix. To evaluate the influence of self-learnable variate embeddings and self-learnable adjacency matrices in ForecastGrapher, we design two distinct variants:
\begin{enumerate}
    \item 'w/o-variate' eliminates self-learnable variate embeddings of nodes.
    \item 'w/o-adp' replaces the adaptive adjacency matrix with the original distance-based adjacency matrix.
\end{enumerate}

The results on PEMS are presented in Table~\ref{tab:Emb_ablation2}. 
We have found that both self-learnable variate embeddings and adaptive adjacency matrices are equally important. For short-term prediction tasks (prediction length = 12), removing these components does not significantly reduce the model's performance. However, when the prediction horizon is extended, removing these components results in a much worse performance. This also indicates that distance-based adjacency matrices are insufficient to fully leverage the inter-series correlations within the traffic dataset. In addition, a visual analysis of the adjacency matrix can be seen in the Appendix~\ref{Learned Graph Visualization}.

%We believe that, over a wide range of time, dynamic graphs are challenging to guide model generate forecastings and might even introduce incorrect correlations. We employee a simple self-learning static adjacency matrix in the original model, and in the PeMS dataset, there is an adjacency matrix associated with the actual distance. We replaced the self-learning adjacency matrix with this static matrix. At the prediction length of 12, there are relatively minor differences between the two approaches , {\color{red}which is similar to the performance in short-term prediction.} As the prediction length further increases, the advantages of self-learning methods become more prominent. 

\begin{table}[htb!]
 \caption{Ablation on PEMS dataset, which includes ablation experiments of variable embedding and self-learning adjacency matrix.}
\centerline{
\resizebox{0.77\linewidth}{!}{
\small
\tabcolsep=0.18cm
\renewcommand\arraystretch{0.98}
\begin{tabular}{cccccccccc}
\toprule
\multicolumn{2}{c}{Dataset}                                               & \multicolumn{2}{c}{PEMS03}                                                &
\multicolumn{2}{c}{PEMS04}                                               &  
\multicolumn{2}{c}{PEMS07} &  \multicolumn{2}{c}{PEMS08}\\ \midrule
\multicolumn{2}{c}{Metric}    & MSE     & MAE    & MSE    & MAE     & MSE    & MAE       & MSE&MAE\\ 
\toprule

%12
\multicolumn{1}{c|}{
}  &\multicolumn{1}{c|}{ForecastGrapher}
& \textbf{0.065}       & \textbf{0.168}       & \textbf{0.075}       & \textbf{0.181}       & \textbf{0.058}       & \textbf{0.152}        & \textbf{0.081} &\textbf{0.184} 
\\

\multicolumn{1}{c|}{
}   &\multicolumn{1}{c|}{w/o-variate}
& 0.069                & 0.174                & 0.079                & 0.187                & 0.063                & 0.161                 & 0.083 &0.187 
\\

\multicolumn{1}{c|}{\multirow{-3}{*}{\rotatebox{90}{12}}}  &\multicolumn{1}{c|}{w/o-adp}
& 0.068                & 0.173                & 0.078                & 0.185                & 0.063                & 0.159                 & 0.086 &0.191 
\\
\midrule

% %24
% \multicolumn{1}{c|}{
% }  &\multicolumn{1}{c|}{ForecastGrapher}
% & \textbf{0.081}       & \textbf{0.186}       & \textbf{0.085}       & \textbf{0.194}       & \textbf{0.069}       & \textbf{0.163}        & \textbf{0.115} &\textbf{0.220} 
% \\

% \multicolumn{1}{c|}{
% }   &\multicolumn{1}{c|}{w/o-variate}
% & 0.089                & 0.198                & 0.094                & 0.207                & 0.077                & 0.177                 & 0.125 &0.230 
% \\

% \multicolumn{1}{c|}{\multirow{-3}{*}{\rotatebox{90}{24}}}  &\multicolumn{1}{c|}{w/o-adp}
% & 0.089                & 0.197                & 0.092                & 0.203                & 0.081                & 0.178                 & 0.128 &0.235 
% \\
% \midrule

% %48
% \multicolumn{1}{c|}{
% }  &\multicolumn{1}{c|}{ForecastGrapher}
% & \textbf{0.111}       & \textbf{0.220}       & \textbf{0.099}       & \textbf{0.213}       & \textbf{0.085}       & \textbf{0.179}        & \textbf{0.169} &\textbf{0.211} 
% \\

% \multicolumn{1}{c|}{
% }   &\multicolumn{1}{c|}{w/o-variate}
% & 0.120                & 0.234                & 0.115                & 0.231                & 0.097                & 0.199                 & 0.186 &0.233 
% \\

% \multicolumn{1}{c|}{\multirow{-3}{*}{\rotatebox{90}{48}}}  &\multicolumn{1}{c|}{w/o-adp}
% & 0.127                & 0.232                & 0.115                & 0.228                & 0.113                & 0.209                 & 0.183 &0.230 
% \\
% \midrule

%96
\multicolumn{1}{c|}{
}  &\multicolumn{1}{c|}{ForecastGrapher}
& \textbf{0.134}       & \textbf{0.244}       & \textbf{0.112}       & \textbf{0.227}       & \textbf{0.103}       & \textbf{0.194}        & \textbf{0.197} &\textbf{0.234} 
\\

\multicolumn{1}{c|}{
}   &\multicolumn{1}{c|}{w/o-variate}
& 0.158                & 0.269                & 0.132                & 0.250                & 0.119                & 0.220                 & 0.221 &0.265 
\\

\multicolumn{1}{c|}{\multirow{-3}{*}{\rotatebox{90}{96}}}  &\multicolumn{1}{c|}{w/o-adp}
& 0.152                & 0.257                & 0.137                & 0.250                & 0.147                & 0.238                 & 0.213 &0.253 
\\
\bottomrule

\end{tabular}
}
}

 \label{tab:Emb_ablation2}
\end{table}

\subsection{Extending the Historical Review Window and Model Efficiency}
\label{Extending the Historical Review Window and Model Efficiency}
Previous research~\cite{zeng2023transformers} has shown that a Transformer model may not necessarily be able to effectively extract information from longer review windows. It is natural to inquire whether the GNN-based ForecastGrapher can still capture sufficient temporal correlations from a longer review window. A robust model should exhibit improved performance as the review window extends, rather than displaying significant fluctuations.

To evaluate whether ForecastGrapher can leverage extended historical review windows, we conduct experiments on ETT, Weather, Traffic and Electricity datasets. The input length is varied from shorter to longer as $\{48,96,192,336,512\}$, and the model is assigned the task of forecasting the values for the next $96$ time steps. Figure~\ref{fig:longerseq} illustrates that ForecastGrapher effectively captures temporal correlations from these extended review windows, yielding superior results as the review windows lengthen, with minimal fluctuations toward optimal performance. This demonstrates that the node regression framework provided by ForecastGrapher remains robust when dealing with multivariate time series containing long sequence inputs. Additionally, we conduct a comprehensive comparison of the performance, training speed, and memory usage of ForecastGrapher and other models on Electricity, as shown in Figure~\ref{fig:efficiency}. Although ForecastGrapher does not achieve the best results in terms of training speed and memory usage, it still outperforms models like PatchTST and FourierGNN. While TimesNet has relatively low memory usage, its training speed is the slowest. Overall, our model achieves the best performance at an acceptable cost.

\begin{figure}[tb]
\centering % Centers the entire figure
\begin{subfigure}[t]{0.49\textwidth} 
  \centering % Centers the subfigure
  \includegraphics[scale=0.36]{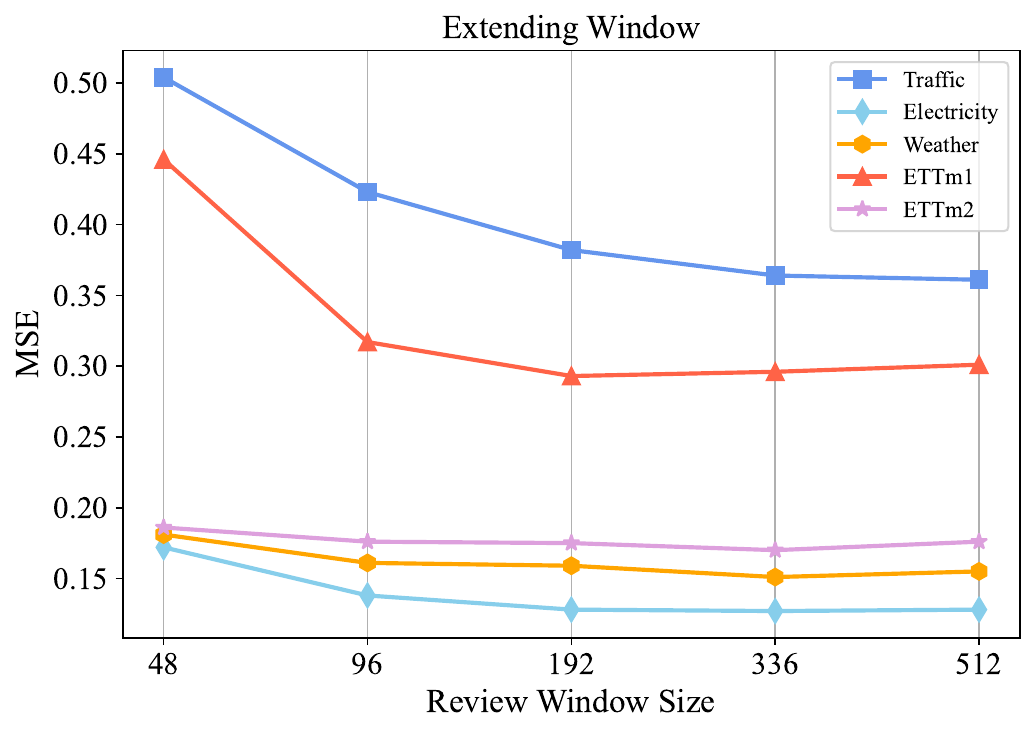} 
  \caption{Forecasting results with output length 96 and input length in \{48,96,192,336,512\}.}
  \label{fig:longerseq}
\end{subfigure}
\hfill % Adds space between subfigures
\begin{subfigure}[t]{0.49\textwidth}
  \centering % Centers the subfigure
  \includegraphics[scale=0.36]{fig/efficiency.pdf} 
  \caption{Model efficiency comparison on Electricity with input length 96 and output length 96.}
  \label{fig:efficiency}
\end{subfigure}

\caption{Analysis of the model robustness and efficiency.}
\label{fig:robustness and efficiency}
\end{figure}

% \begin{figure}[t]
% \centerline{\includegraphics[scale=0.45]{fig/longerseq.pdf}}
% \caption{Forecasting results with 96 predict length and input length in \{48,96,192,336,512\}.}
% \label{fig:longerseq}
% \end{figure}

\subsection{Hyperparameter Sensitivity}
We evaluate the hyperparameter sensitivity of ForecastGrapher, focusing specifically on three factors: the embedding dimension $D$, the number of GNN layers $Layer$, and the learning rate $LR$. The results are presented in Figure \ref{fig:sensitivity}. We observe  that for datasets with large variables such as Traffic and Electricity, the error decreases as $D$, $Layer$, and $LR$ increase. The Weather dataset exhibits less sensitivity to these hyperparameters, maintaining stable performance across varying configurations.

\begin{figure}[t]
\centerline{\includegraphics[width=1\linewidth]{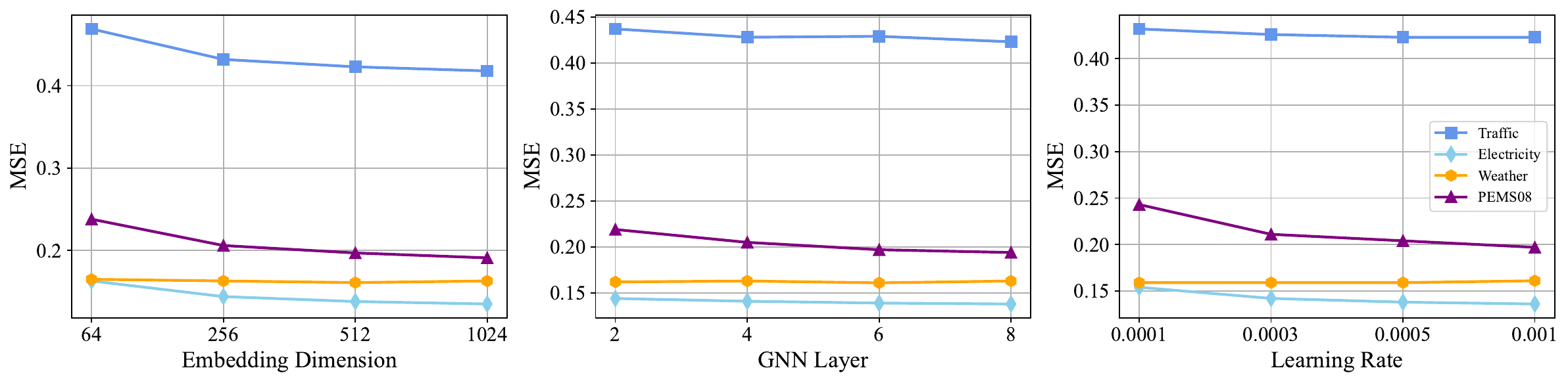}}
\caption{Sensitivity analysis in embedding dimension, number of GNN layers and learning rate. The results are recorded with input length 96 and output length 96.}
\label{fig:sensitivity}
\end{figure}

\section{Limitations}
\label{Limitations}
Our work has some limitations, as we have only focused on time series forecasting tasks within this framework. Research on other time series tasks, such as time series classification and anomaly detection, has not been conducted. Although ForecastGrapher outperforms models like iTransformer and DLinear in terms of prediction accuracy, its computational cost is higher. The main issue may lie in the self-learning graph structure, which still has room for optimization.

\section{Conclusion}

In this study, we take the innovative approach of framing multivariate time series forecasting as a node regression task within graph data, employing GNNs to address this challenge. We treat each variate in the dataset as a node, forming a graph that effectively captures the 'where' and 'when' information of the target time series. However, applying GNNs directly to this graph structure encounters limitations in expressive power. To overcome this, we introduce learnable scalars and 1D convolutions on the feature dimensions within each node to enhance information diversity. Leveraging this node regression framework and an enhanced GNN block, we develop the ForecastGrapher architecture. Through extensive testing across twelve datasets, the superiority of ForecastGrapher has been clearly demonstrated. We envision this pioneering effort as a foundational architecture for a broad range of time series analysis tasks.

%From the above equation, it is evident that CNNs with varying kernel sizes offer a convenient means of diversifying the feature distribution. Even if the input distributions are identical, different values of $k$ will result in entirely distinct output distributions. This implies that by merely adjusting a hyperparameter of the CNNs, we can obtain a variety of distributions, a feat that is challenging to achieve with other structures. 

%Another advantage of our architecture is the utilization of learnable scalars to alter the distribution of the input. Given a scalar \( a \), the transformed input \( x^{l}_{j}(\alpha) \) will be distributed as \( \mathcal{N}(a\mu_l, a^2\sigma_l^2) \), leading to a completely different distribution after the CNN projection.

%%%%%%
\normalem %add
\bibliographystyle{plain}
\bibliography{arxiv}{}
%%%%%%

\newpage
\appendix

\section{Deep Analysis of GFC-GNN}
\label{appen:mc}
We first analyze the limitation of conventional GCNs following~\cite{bi2023mm}. Our assumption is that there are two node classes on the graph, which exhibit significant differences in the distribution of their future values. However, their node features share similarities in distribution, such as having equal means. For a standard GCN using weighted mean aggregator~\cite{kipf2016semi}, we identify the following limitations. 
\begin{theorem}
Given a graph $\mathcal{G}(\mathcal{V}, \mathcal{E})$, we denote the nodes belonging to class $C_i$ as $\{v_i \mid v_i \in C_i\}$. Assume the feature distribution $\mathbf{h}_{i}$ of nodes in $C_i$ follows an i.i.d. Gaussian distribution $\mathcal{N}(\mu_i, \sigma_i^2)$. For any two distinct classes $C_i$ and $C_j$, if $\mu_i = \mu_j$ and $\sigma_i \neq \sigma_j$, then the $p$-norm distance between the expectation of GCN outputs of these two classes is zero: $\left\| \mathbb{E}_{v_k \sim C_i} \left(\text{GCN}\left(\mathbf{h}_k\right)\right) - \mathbb{E}_{v_k \sim C_j} \left(\text{GCN}\left(\mathbf{h}_k\right)\right) \right\|_p = 0$.
\end{theorem}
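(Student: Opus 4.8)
The plan is to reduce the statement to the single defining property of a mean aggregator: in expectation its output is controlled entirely by the first moment of the feature distribution and is completely blind to higher moments such as the variance. First I would make the GCN map explicit. A standard GCN layer with a weighted mean aggregator sends node $v_k$ to $\text{GCN}(\mathbf{h}_k) = \phi\bigl(\sum_{j} w_{kj}\,\mathbf{h}_j\bigr)$, where the row-normalized aggregation weights satisfy $\sum_j w_{kj} = 1$ (this is exactly what makes it a weighted \emph{mean}), and $\phi$ is the learnable channel-mixing transformation $\mathbf{h}\mapsto \mathbf{h}\mathbf{W}$. The whole argument will hinge on the convex-combination property $\sum_j w_{kj}=1$.

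The central computation is to evaluate $\mathbb{E}_{v_k\sim C_i}\bigl[\sum_j w_{kj}\,\mathbf{h}_j\bigr]$, where the expectation is taken over the class-conditional feature randomness. Since the features drawn for class $C_i$ are i.i.d. $\mathcal{N}(\mu_i,\sigma_i^2)$, linearity of expectation gives $\sum_j w_{kj}\,\mathbb{E}[\mathbf{h}_j] = \mu_i \sum_j w_{kj} = \mu_i$. The essential observation is that $\sigma_i^2$ enters only the \emph{variance} of the aggregated quantity, namely $\sigma_i^2\sum_j w_{kj}^2$, and never its mean; the expected aggregated feature is exactly $\mu_i$, independent of $\sigma_i$. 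Pushing this through the linear map $\phi$, again by linearity of expectation, yields $\mathbb{E}_{v_k\sim C_i}[\text{GCN}(\mathbf{h}_k)] = \mu_i\mathbf{W}$, a vector that depends on the class only through its mean.

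To finish I would apply the identical computation to $C_j$, obtaining $\mathbb{E}_{v_k\sim C_j}[\text{GCN}(\mathbf{h}_k)] = \mu_j\mathbf{W}$, and then invoke the hypothesis $\mu_i=\mu_j$. The two expectations coincide, so their difference is the zero vector; since every norm of the zero vector vanishes, $\bigl\|\mathbb{E}_{v_k\sim C_i}(\text{GCN}(\mathbf{h}_k)) - \mathbb{E}_{v_k\sim C_j}(\text{GCN}(\mathbf{h}_k))\bigr\|_p = 0$ for every $p$, which is the claim. The conclusion is thus a structural property of mean aggregation, not of any particular weights or feature dimension.

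The step I expect to be the main obstacle is the passage through any nonlinear activation $\sigma$ that a literal Kipf--Welling layer applies after the linear transform. For nonlinear $\sigma$ one no longer has $\mathbb{E}[\sigma(\cdot)]=\sigma(\mathbb{E}[\cdot])$, and because the aggregated features for $C_i$ and $C_j$ share the mean $\mu_i=\mu_j$ but differ in variance ($\sigma_i\neq\sigma_j$), the quantity $\mathbb{E}[\sigma(\cdot)]$ could in principle separate the two classes. I would therefore make explicit the convention, standard in this line of expressiveness results, that the aggregation-plus-linear stage is the object of analysis, so the statement concerns the expected pre-activation representation. I would also emphasize that this very variance-blindness is precisely the deficiency the GFC module is designed to repair, which is why exhibiting it cleanly at the level of the mean aggregator is the right target for the theorem.
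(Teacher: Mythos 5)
Your proposal is correct and takes essentially the same route as the paper: both arguments reduce to the observation that a row-normalized (weighted mean) aggregator, composed with linear maps, preserves the first moment by linearity of expectation, so classes with equal means and unequal variances have identical expected outputs. The only difference is one of care rather than substance — the paper simply declares that the activation function and learnable weights are omitted, whereas you retain the linear weights (harmlessly) and explicitly flag that the claim would not survive a nonlinear activation, which is a worthwhile clarification but not a different proof.
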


\begin{proof}
To simplify the analysis, we omit the activation function and learnable weights. The update function of a GCN with a mean aggregator is:
\begin{equation}
    \mathbf{h}^{(l+1)}_k = \frac{1}{d_k} \sum_{n \in \mathcal{N}(v_k)} a_{nk} \mathbf{h}^{(l)}_n,
\end{equation}
where $d_k$ is the degree of node $k$, defined as $d_k = \sum_{n \in \mathcal{N}(v_k)} a_{nk}$. Here, $a_{nk}$ represents the element of the adjacency matrix $\mathbf{A}$.  Therefore, we have:
\begin{equation}
 \mathbb{E}_{v_k \sim C_i} \left(\mathbf{h}^{(l+1)}_k\right) = \mathbb{E}_{v_k \sim C_j} \left(\mathbf{h}^{(l+1)}_k\right).
\end{equation}
\end{proof}

Now we analyze the shift in nodes' feature distribution of GFC. We focus on a simplified scenario in which the 1D CNN employs circularly-padded activations. Additionally, both the weights of this CNN and the nodes' features prior to convolution are assumed to be independently and identically distributed (i.i.d.) and drawn from a Gaussian distribution (similar to \cite{xiao2018dynamical, novak2018bayesian}).

Let $\mathbf{h}^{l}(\alpha)$ denote the output at layer $l$ and spatial location $\alpha$. Assume $\mathbf{h}^{l}(\alpha)$ is independently and identically distributed (i.i.d.) from the Gaussian distribution $\mathcal{N}\left(\mu, \sigma^2\right)$. For the $j$-th value in the given group $g$, we scale $\mathbf{h}^{l}(\alpha)$ by a factor of $s_j$ to obtain $\mathbf{h}_j^{l}(\alpha)$. Consequently, $\mathbf{h}_j^{l}(\alpha)$ follows a distribution of $\mathcal{N}\left(s_j\mu, s_j^2\sigma^2\right)$. Consider a 1D periodic CNN with a filter size of $k_g$, a channel size of $\frac{z}{G}$, and a spatial size of $D$ (where convolution is performed over the feature dimension of size $D$). Assume the weights $\mathbf{W}^{l}_{g} \in \mathbb{R}^{k_g \times \frac{z}{G} \times \frac{z}{G}}$ are i.i.d. from $\mathcal{N}\left(\mu_w, \sigma_w^2\right)$, and we have $\text{ACT}$ as the activation function.
The forward-propagation dynamics are described by the following recurrence relation:
\begin{equation}
\mathbf{h}_i^{l+1}(\alpha) = \text{ACT}\left(\sum^{\frac{z}{G}}_{j=1}\sum^{k_g}_{\beta = 1} \mathbf{h}_j^{l}\left(\alpha + \beta\right) \mathbf{W}^{l}_{g}(\beta, i, j) \right).
\end{equation}
%We will see that the resulting distribution of $\mathbf{h}_i^{l+1}(\alpha)$ corresponds to the distribution for sums of products of Gaussian variables. The mean of this distribution is heavily impacted by factors such as the mean value and standard deviation of $\mathbf{h}_j^{l}\left(\alpha + \beta\right)$, as well as the number of additions~\cite{luo2016understanding}, and is therefore significantly influenced by both the kernel length $k_g$ and learnable scaler $s_j$.

\begin{figure}[t]
\centering % Centers the entire figure

\begin{subfigure}[b]{0.45\textwidth} 
  \centering % Centers the subfigure
  \includegraphics[scale=0.25]{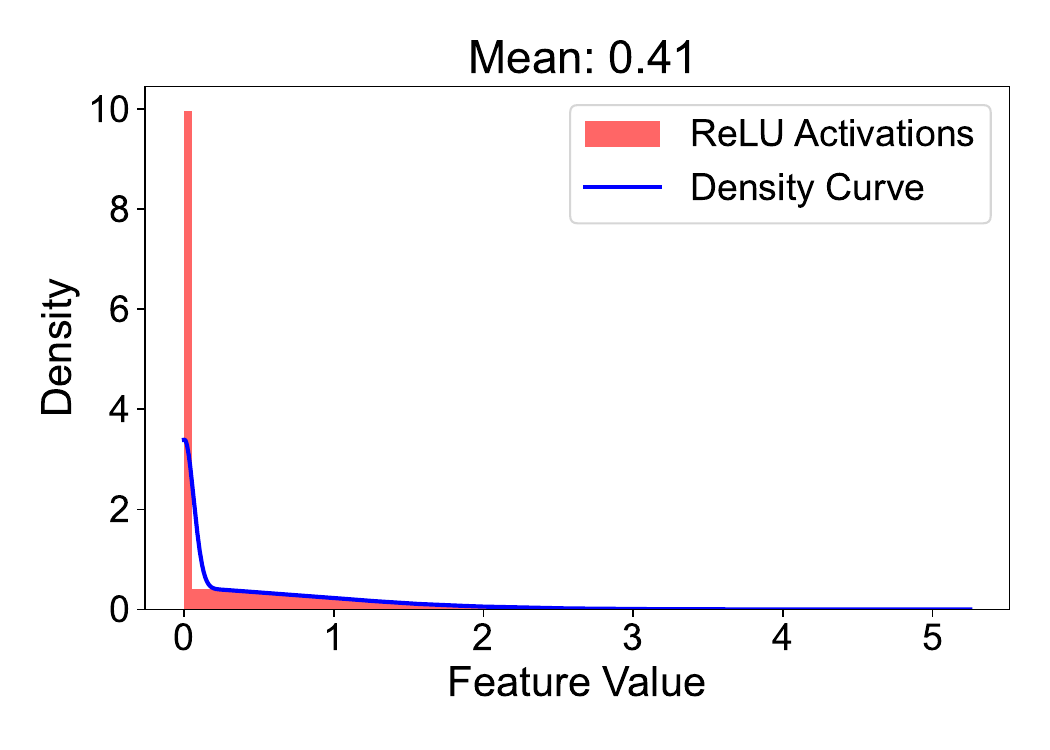} 
  \caption{Distribution shift for $\mathcal{N}(1, 1)$ with $k_g = 2$, $\frac{z}{G} =2$, $s_1 = -0.25$, $s_2 = 0.25$ }
  \label{fig:s1}
\end{subfigure}
% \hfill % Adds space between subfigures
\begin{subfigure}[b]{0.45\textwidth}
  \centering % Centers the subfigure
  \includegraphics[scale=0.25]{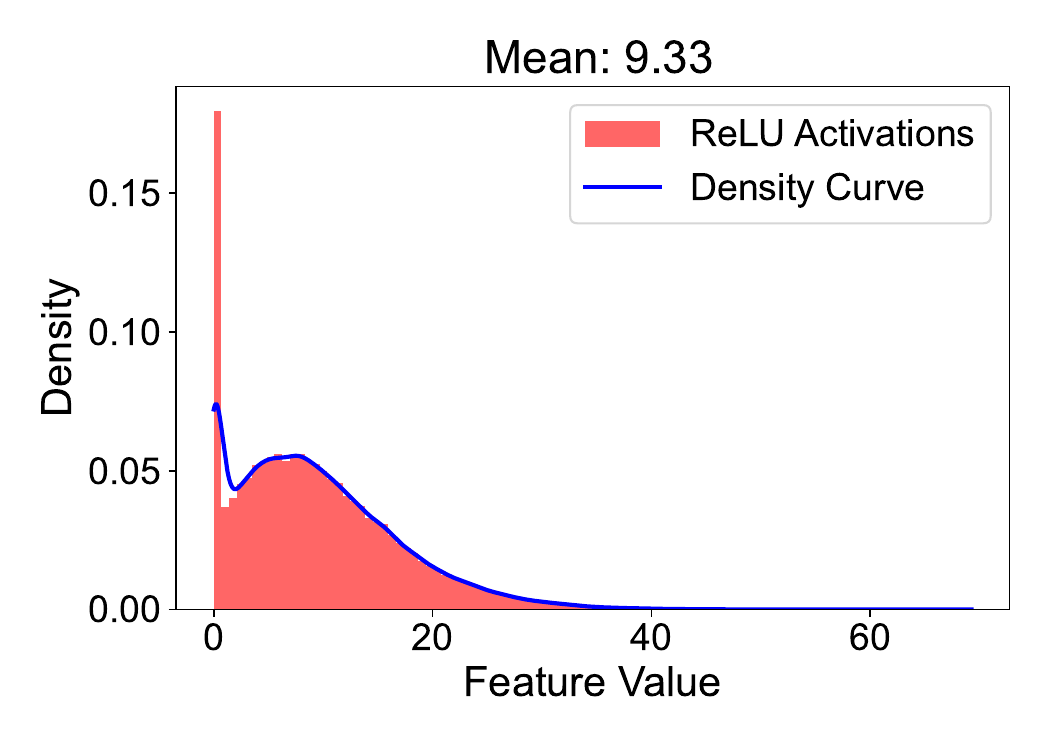} 
  \caption{Distribution shift for $\mathcal{N}(1, 1)$ with $k_g = 3$, $\frac{z}{G} =2$, $s_1 = 2.0$, $s_2 = 2.5$ }
  \label{fig:s2}
\end{subfigure}
\begin{subfigure}[b]{0.45\textwidth} 
  \centering % Centers the subfigure
  \includegraphics[scale=0.25]{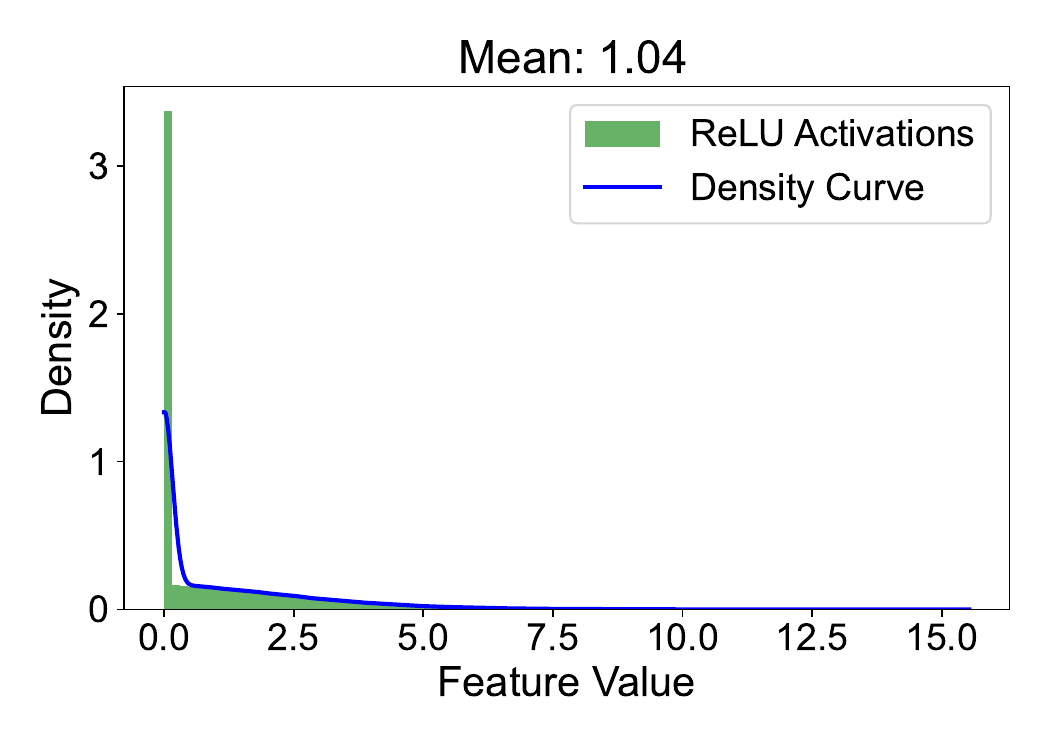} 
  \caption{Distribution shift for $\mathcal{N}(1, 3)$ with $k_g = 2$, $\frac{z}{G} =2$, $s_1 = -0.25$, $s_2 = 0.25$}
  \label{fig:s3}
\end{subfigure}
% \hfill % Adds space between subfigures
\begin{subfigure}[b]{0.45\textwidth}
  \centering % Centers the subfigure
  \includegraphics[scale=0.25]{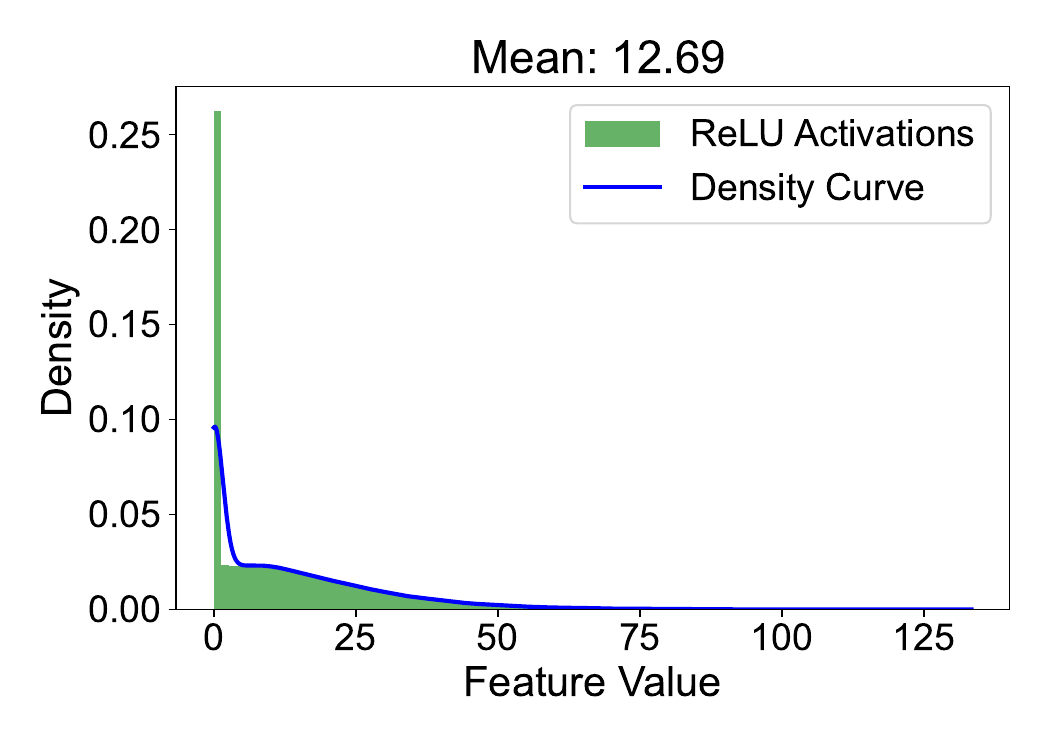} 
  \caption{Distribution shift for $\mathcal{N}(1, 3)$ with $k_g = 3$, $\frac{z}{G} =2$, $s_1 = 2.0$, $s_2 = 2.5$ }
  \label{fig:s4}
\end{subfigure}
\caption{Monte Carlo simulations of the distribution and mean value of $\mathbf{h}_i^{l+1}(\alpha)$.}
\label{fig:conv_distribution}
\end{figure}

Providing an analytical form for $\mathbf{h}_i^{l+1}(\alpha)$ is challenging, but its mean and distribution depend not only on the expected value $\mu$ and variance $\sigma$ of the input distribution but also on the convolution kernel length $k_g$ and the value of the learnable scaler $s_j$. We utilized the Monte Carlo method to analyze the distribution of input features following the distributions $\mathcal{N}(1, 1)$ and $\mathcal{N}(1, 3)$ under the influence of two different sets of kernel lengths $k_g$ and learnable scalers. We set the activation function as ReLU~\cite{fukushima1975cognitron} function. 
The results are illustrated in Figure~\ref{fig:conv_distribution}. From the figure, it is evident that even if the distributions of $\mathbf{h}^{l}(\alpha)$ and the learned 1D CNN weights are identical, varying the values of $k_g$ and $s_j$ can lead to entirely distinct output distributions. Moreover, when distributions with the same mean but different variances undergo group feature convolution, the expected value of the feature also changes.% Thus, the GFC-GNN model can circumvent the issue with GCNs described in Theorem 1. We include additional information in the Appendix \ref{Analyzing Distribution Shifts Post-GFC}

\section{Implementation Details}
\label{Implementation Details}

\subsection{Datasets}
\label{dataset_appendix}
We utilized 12 datasets in our experiment, all of which are extensively employed for benchmark testing. The datasets encompass a diverse range of applications and scenarios, ensuring a comprehensive evaluation of our method. The following provides a detailed overview of each dataset: (1) \textbf{ETT} \cite{zhou2021informer} collects 7 features data at two distinct time scales: hourly and every 15 minutes. These data are gathered from two regions, resulting in a total of four datasets: h1, h2, m1, and m2. (2) \textbf{Electricity}\footnote{https://archive.ics.uci.edu/ml/datasets/ElectricityLoadDiagrams20112014 }  records the hourly electricity consumption of 321 customers. (3) \textbf{Exchange}\cite{lai2018modeling} records daily exchange rates for 8 countries from 1990 to 2016. (4) \textbf{Traffic}\footnote{http://pems.dot.ca.gov} collects the road occupancy rate measured by 862 sensors on San Francisco freeways every hour since January 2015. (5) \textbf{Weather}\footnote{https://www.bgc-jena.mpg.de/wetter/} gathers 21 meteorological indicators, including air temperature, with a ten-minute time granularity. (6) \textbf{PEMS} collects traffic flow data in California through multiple sensors and we use four datasets including 03, 04, 07, 08 used by iTransformer\cite{liu2023itransformer}. 

We set the input length to $96$, the output lengths for the PEMS dataset are $\{12, 24, 48, 96\}$, and for others are $\{96,192,336,720\}$. Table~\ref{tab:dataset} presents the number of variate, prediction length, dataset partition size, and frequency information for each dataset, providing a overview of the datasets used in our experiment. This information is essential for understanding the scale and characteristics of the datasets. %In Table~\ref{tab:dataset}, we also offer the data standardization option for different datasets. 

\begin{table}[htb!]
\caption{Description of all datasets.}
\renewcommand{\arraystretch}{1} 
\centerline{
\resizebox{0.7\linewidth}{!}{
\begin{tabular}{c|cccc}
\toprule
\multirow{1}{*}{Datasets} & \multirow{1}{*}{Nodes}  &\multirow{1}{*}{Prediction Length}& \multirow{1}{*}{Dataset Size}& \multirow{1}{*}{Frequency} \\
                          \midrule
ETTm1                    & 7                       &\{96, 192, 336, 720\}& (34465, 11521, 11521)         & 15 minutes                 \\ \midrule
ETTm2                    & 7                       &\{96, 192, 336, 720\}& (34465, 11521, 11521)         & 15 minutes                 \\ \midrule
ETTh1                    & 7                       &\{96, 192, 336, 720\}& (8545, 2881, 2881)            & Hourly                     
\\ \midrule

ETTh2                    & 7                       &\{96, 192, 336, 720\}& (8545, 2881, 2881)            & Hourly                     
\\ \midrule
Electricity              & 321                     &\{96, 192, 336, 720\}& (18317, 2633, 5261)           & Hourly                     \\ \midrule
Exchange                 & 8                       &\{96, 192, 336, 720\}& (5120, 665, 1422)             & Daily                      \\ \midrule
Traffic& 862 &\{96, 192, 336, 720\}& (12185,1757,3590)& Hourly                     \\ \midrule 
 Weather                  & 21                      &\{96, 192, 336, 720\}& (36792, 5271, 10540)          &10 minutes                 \\ \midrule
 PEMS03& 358 &\{12, 24, 48, 96\}& (15629, 5147, 5147)&5 minutes\\ \midrule
 PEMS04& 307 &\{12, 24, 48, 96\}& (10100, 3303, 3304)&5 minutes\\ \midrule
 PEMS07& 883 &\{12, 24, 48, 96\}& (16839, 5550, 5550)&5 minutes\\ \midrule
 PEMS08& 170 &\{12, 24, 48, 96\}& (10618, 3476, 3477)&5 minutes\\ 
 \bottomrule
\end{tabular}
}
}
\label{tab:dataset}
\end{table}

\subsection{Settings and Hyperparameters}
\label{app:Settings and Hyperparameters}
\begin{table*}[h]
\caption{Hyperparameters of ForecastGrapher on different datasets.}
\renewcommand{\arraystretch}{1.2} 
\centerline{
\resizebox{1\linewidth}{!}{
\begin{tabular}{c|cccccccccccc}
\toprule
\multirow{1}{*}{Datasets} & \multirow{1}{*}{ETTm1}&\multirow{1}{*}{ETTm2} &\multirow{1}{*}{ETTh1} &\multirow{1}{*}{ETTh2}&\multirow{1}{*}{ECL}& \multirow{1}{*}{Exchange}& \multirow{1}{*}{Weather} &\multirow{1}{*}{Traffic} & \multirow{1}{*}{PEMS03}& \multirow{1}{*}{PEMS04}& \multirow{1}{*}{PEMS07}&\multirow{1}{*}{PEMS08}
\\
     \midrule
Epochs& \multicolumn{12}{c}{10}\\ \midrule

Batch& \multicolumn{7}{c|}{32}& \multicolumn{1}{c|}{16} & \multicolumn{4}{c}{32}\\
\midrule
Loss& \multicolumn{12}{c}{MSE}\\ \midrule

Learning Rate& \multicolumn{1}{c|}{1e-4} &\multicolumn{1}{c|}{1e-4}  &\multicolumn{1}{c|}{1e-4} &\multicolumn{1}{c|}{1e-4}& \multicolumn{1}{c|}{5e-4}& \multicolumn{1}{c|}{1e-4}&\multicolumn{1}{c|}{1e-4}& \multicolumn{1}{c|}{1e-3} & \multicolumn{1}{c|}{1e-3}& \multicolumn{1}{c|}{5e-4}& \multicolumn{1}{c|}{1e-3}&1e-3
\\ \midrule

%GNN Block& \multicolumn{1}{c|}{\{1, 3\}} & \multicolumn{1}{c|}{4}& \multicolumn{1}{c|}{1}& \multicolumn{1}{c|}{3}& 4\\ \midrule

Layer& \multicolumn{1}{c|}{1} &  \multicolumn{1}{c|}{1}& \multicolumn{1}{c|}{2}&\multicolumn{1}{c|}{1}& \multicolumn{1}{c|}{8}& \multicolumn{1}{c|}{1}& \multicolumn{1}{c|}{6}&\multicolumn{1}{c|}{8} & \multicolumn{1}{c|}{6}&\multicolumn{1}{c|} {9}& \multicolumn{1}{c|}{6}&6
\\ \midrule

$D$& \multicolumn{1}{c|}{512} &  \multicolumn{1}{c|}{512}& \multicolumn{1}{c|}{128}&\multicolumn{1}{c|}{512}& \multicolumn{1}{c|}{512}& \multicolumn{1}{c|}{128}& \multicolumn{1}{c|}{512}& \multicolumn{1}{c|}{512} & \multicolumn{4}{c}{512}\\ \midrule

 $z$& \multicolumn{1}{c|}{32} &  \multicolumn{1}{c|}{32}& \multicolumn{1}{c|}{32}&\multicolumn{1}{c|}{32}& \multicolumn{1}{c|}{8}& \multicolumn{1}{c|}{8}& \multicolumn{1}{c|}{32}& \multicolumn{1}{c|}{8} & \multicolumn{4}{c}{8}\\ \midrule
 
 $c$& \multicolumn{7}{c|}{10}& \multicolumn{1}{c|}{1000} & \multicolumn{4}{c}{10}\\
 \midrule
 $k_g$& \multicolumn{12}{c}{3, 5, 7}\\
 \midrule
 group& \multicolumn{12}{c}{4}\\ \midrule
 
 %Norm& \multicolumn{5}{c}{1}\\ \midrule
Optimizer                   & \multicolumn{12}{c}{Adam \cite{kingma2014adam}}\\ \bottomrule
\end{tabular}
}
}
\label{tab:HyperPara}
\end{table*}

All experiments are performed utilizing RTX 4090 24GB GPU devices, and the training process is refined through the Adam\cite{kingma2014adam} optimizer, with MSE loss function. Regarding $batch$ size, 16 is chosen for the traffic dataset due to memory constraints, while 32 is maintained for all others. We set the embedding dimension $D$ within the range of $\{128, 512, 1024\}$, the learning rate within $\{0.0001, 0.0005, 0.001\}$, and the number of GNN layers from $1$ to $9$. The scaling number $z$ in the learnable scaler is selected from $\{8, 32\}$. We perform a grid search within these parameter ranges to find the optimal settings. Furthermore, we offer different standardization~\cite{kim2021reversible,liu2022non} options for different datasets, and we forego standardization for PEMS. For the PEMS datasets, we do not perform normalization prior to embedding. However, for all other datasets, normalization is applied beforehand. We provide specific hyperparameters for different datasets in Table~\ref{tab:HyperPara}. 

\section{Other GNN Variants}
\label{app:gnn}

For the standard GCN, we apply graph convolution to each group. The formula undergoes only minor changes compared to Equation~\eqref{eq:gfc}:
\begin{equation}
    \begin{split}
    \mathbf{T}^l_{g,t} &= \text{Conv1d}_g\left(\hat{\mathbf{H}}^l_{g,t}, k_g\right), \quad g = 2, \ldots, G, \\
    \mathbf{V}^l_{g,t} &=  \mathbf{A}^l \hat{\mathbf{T}}^l_{g,t}, \quad g = 1, \ldots, G, \\
    \mathbf{H}^{l+1}_{t} &= \text{MLP} \left( \hat{\mathbf{V}}^l_{1,t} \mid \mathbf{V}^l_{2,t} \mid \ldots \mid \mathbf{V}^l_{G,t} \right).
    \end{split}
\end{equation}

The only difference is that we do not retain a module that bypasses the graph convolution. If we do not use GFC, the model abandons the grouping mechanism and operates directly on the ungrouped $\mathbf{H}^l_{t}$. In this case, $\mathbf{H}^{l+1}_{t} = \mathbf{A}^l \mathbf{H}^l_{t} \mathbf{W}^l_{\text{GCN}}$. When using GAT and MixHop, we replace $\mathbf{V}^l_{g,t} =  \mathbf{A}^l \hat{\mathbf{T}}^l_{g,t}$ with the corresponding networks.

\section{Ablation on Embedding}

To assess the impact of different embedding components in ForecastGrapher, we develop three distinct variants:
\begin{enumerate}
    \item 'w/o-variate' eliminates variate embeddings of nodes, removing information about 'where.'.
    \item 'w/o-hid' omits hour in day embeddings, thus eliminating temporal information about 'when' within a single day.
    \item 'w/o-diw' removes day in week embeddings, which eliminates information about the 'when' across a week.
\end{enumerate}

The results on ETTm1, ETTm2 and ETTh2 (The ETT datasets include comprehensive calendar information) presented in Table \ref{tab:Emb_ablation} generally show that models achieve optimal performance when all embedding components are incorporated. However, it's important to note that not all datasets respond equally to various embedding strategies. For example, ETTm1 exhibits robustness even in the absence of variate and hid embeddings.

\begin{table}[htb!]
\caption{Ablation on embedding. The results are obtained from the mean of all prediction lengths, and the best results are highlighted in bold.}
\centerline{
% \resizebox{1\linewidth}{!}{
\small
\tabcolsep=0.18cm
\renewcommand\arraystretch{0.98}
\begin{tabular}{cccccll}
\toprule
\multicolumn{1}{c}{Dataset}                                               & \multicolumn{2}{c}{ETTm1}                                                &
\multicolumn{2}{c}{ETTm2}                                                &                                                \multicolumn{2}{c}{ETTh2}\\ \midrule
\multicolumn{1}{c}{Metric}    & MSE     & MAE    & MSE    & MAE     & MSE    &MAE      \\ 
                      \toprule
\multicolumn{1}{c|}{ForecastGrapher
}  & \textbf{0.383} & \textbf{0.397} & \textbf{0.276} & \textbf{0.323} & \textbf{0.372} & \textbf{0.402} \\

\multicolumn{1}{c|}{w/o-variate
}   & 0.384          & 0.397          & 0.280          & 0.325          & 0.377          & 0.403          \\

\multicolumn{1}{c|}{w/o-hid
}  & 0.384          & 0.398          & 0.278          & 0.324          & 0.376          & 0.404          \\

\multicolumn{1}{c|}{w/o-diw
}   & 0.395          & 0.408          & 0.278          & 0.324          & 0.384          & 0.407         
\\
\bottomrule
\end{tabular}
% }
}
 
 \label{tab:Emb_ablation}
\end{table}

\section{Learned Graph Visualization}
\label{Learned Graph Visualization}
To enhance the interpretability of our analysis, we visualize the learned partial adjacency matrix on the PEMS04 dataset in Figure \ref{fig:adj}. Specifically, we generate a heatmap to visualize the associations among the top 50 nodes in the dataset. This heatmap provides a quick overview of the relationships between these nodes. We also visualized the time series corresponding to node pairs with higher values in the learnable adjacency matrix. Moreover, we compare the learned adjacency matrix with the preset distance-based adjacency matrix. Firstly, the adjacency matrix learned by ForecastGrapher is sparser, indicating that the model requires fewer inter-series correlations for predictions. Furthermore, the learnable adjacency matrix captures connections between time series that are distant yet exhibit strong similarities, for example, nodes 0 and 36, 8 and 35, as well as 34 and 44, which are challenging to represent in distance-based static adjacency matrices. 
%It reveals a strong correlation between specific node pairs: nodes $0$ and $36$, $8$ and $35$, as well as $34$ and $44$, which are difficult to reflect in distance based static adjacency matrices.

\begin{figure*}[ht]
\centerline{\includegraphics[scale=0.5]{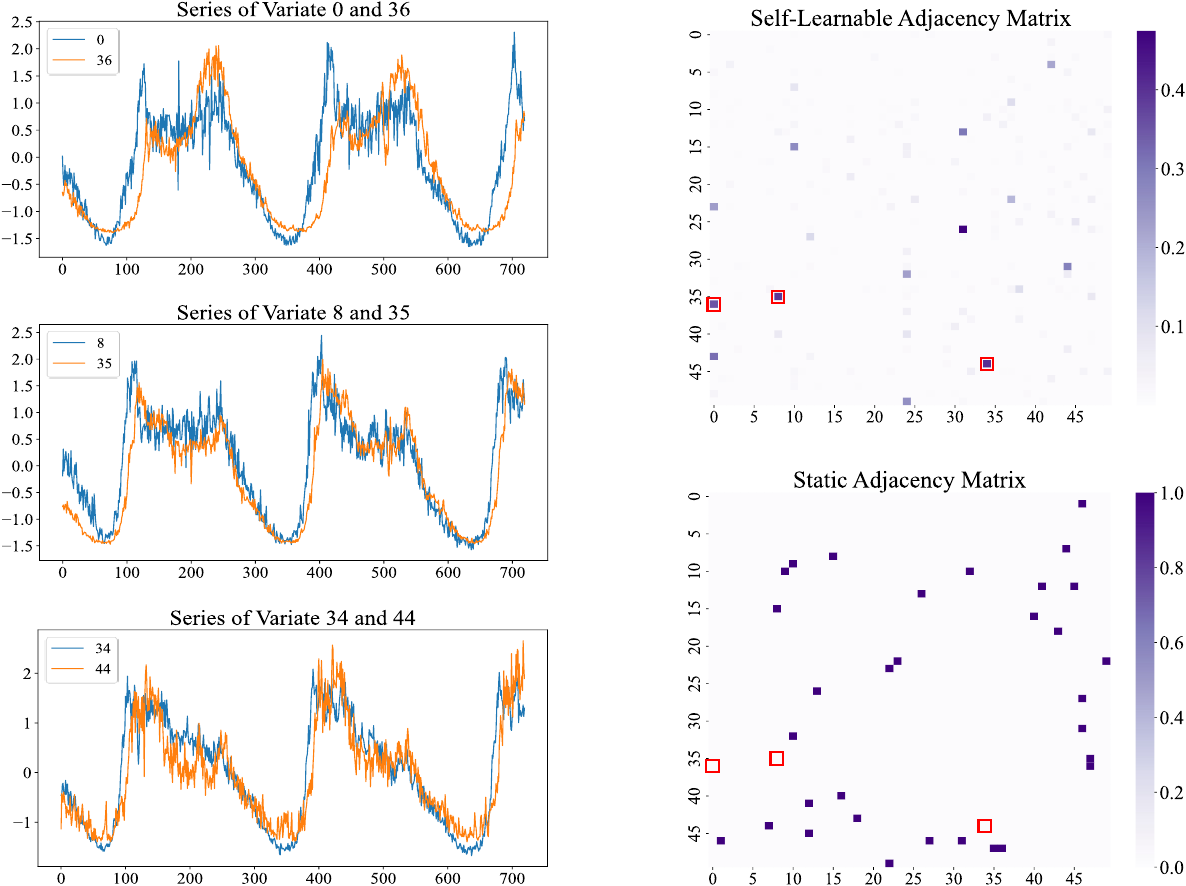}}
\caption{Visualization of the adjacency matrix for the top 50 nodes in the PEMS04 dataset, showcasing both the third-layer learnable adjacency matrix and the preset static adjacency matrix. The preset static adjacency matrix fails to capture the correlations between time series with strong similarities.}
\label{fig:adj}
\end{figure*}

\section{More Forecasting Results Visualization}

To facilitate a comprehensive comparison of model performances, we include additional prediction outcomes in Figure \ref{fig:traffic_appendix} to Figure \ref{fig:weather_appendix}. We utilize iTransformer\cite{liu2023itransformer}, PatchTST\cite{Yuqietal-2023-PatchTST},  Crossformer\cite{zhang2023crossformer}, DLinear\cite{zeng2023transformers}, TimesNet\cite{wu2023timesnet} as benchmarks for comparison. It becomes evident that our model excels in predicting future trends, thus demonstrating its superior performance.

\begin{figure*}[htb!]
\centerline{\includegraphics[scale=0.17]{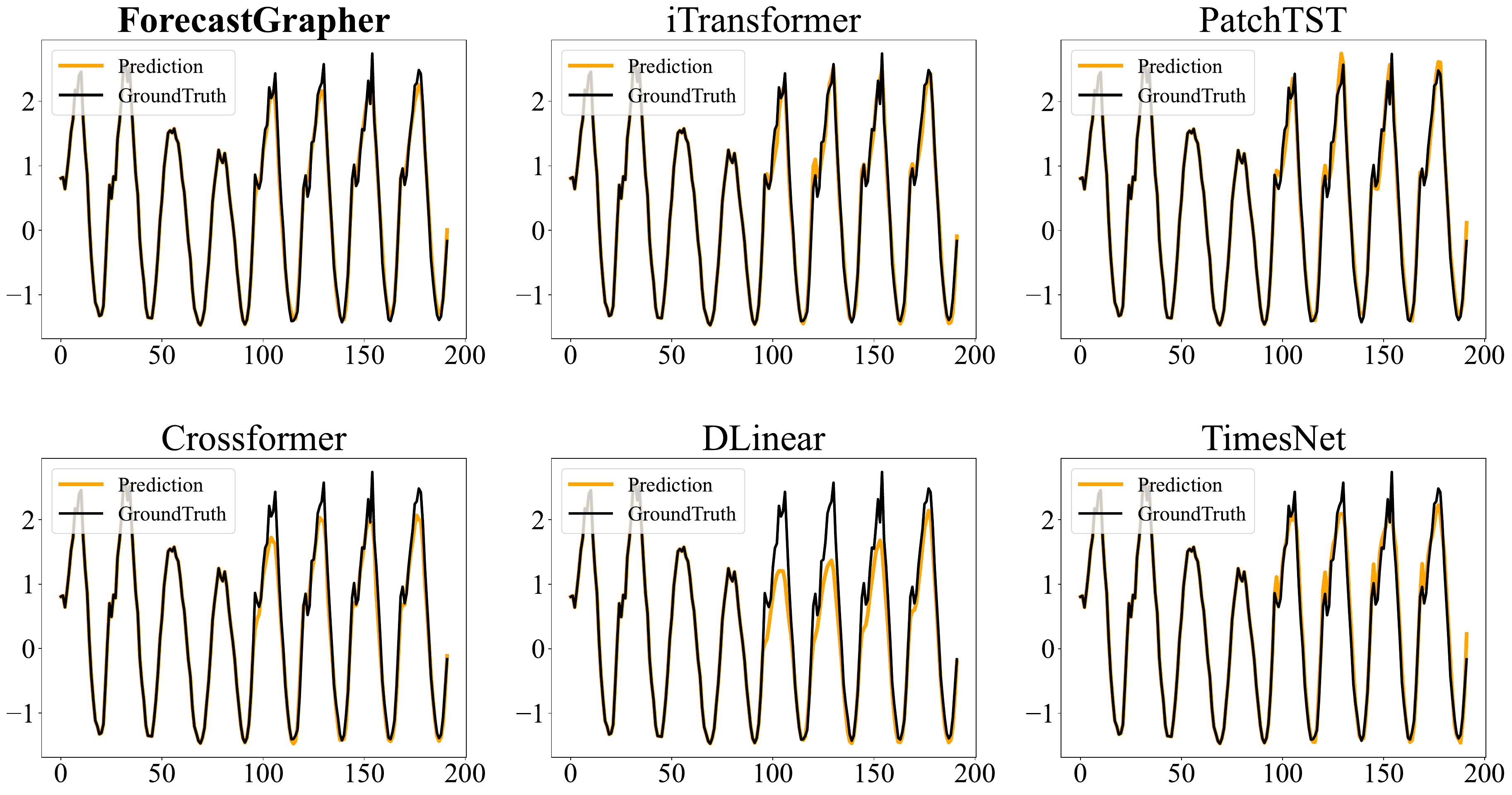}}
\caption{Visualization of input 96 and output 96 prediction results on the Traffic dataset.}
\label{fig:traffic_appendix}
\end{figure*}

\begin{figure*}[htb!]
\centerline{\includegraphics[scale=0.17]{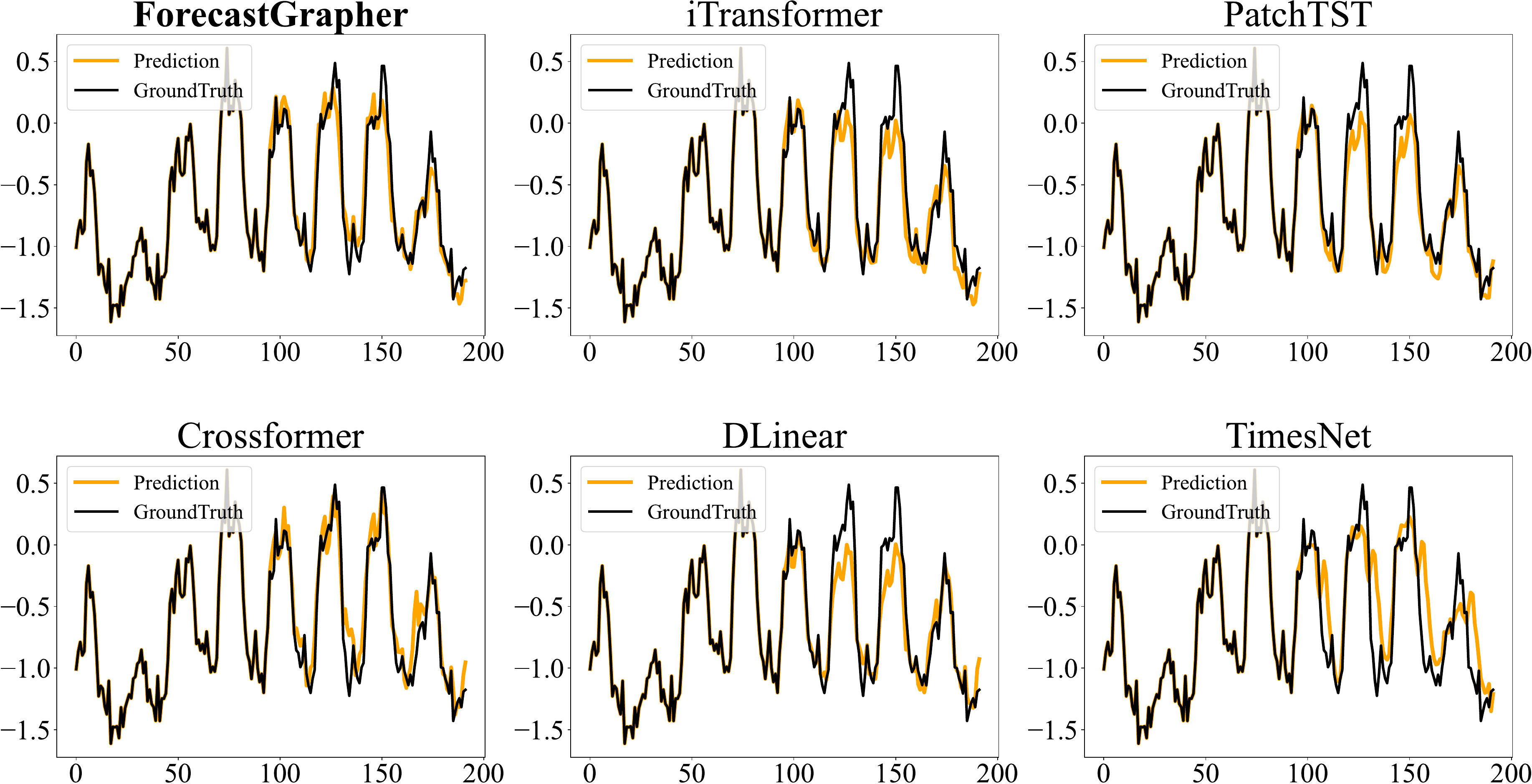}}
\caption{Visualization of input 96 and output 96 prediction results on the Electrcity dataset.}
\label{fig:ECL_appendix}
\end{figure*}

\begin{figure*}[htb!]
\centerline{\includegraphics[scale=0.17]{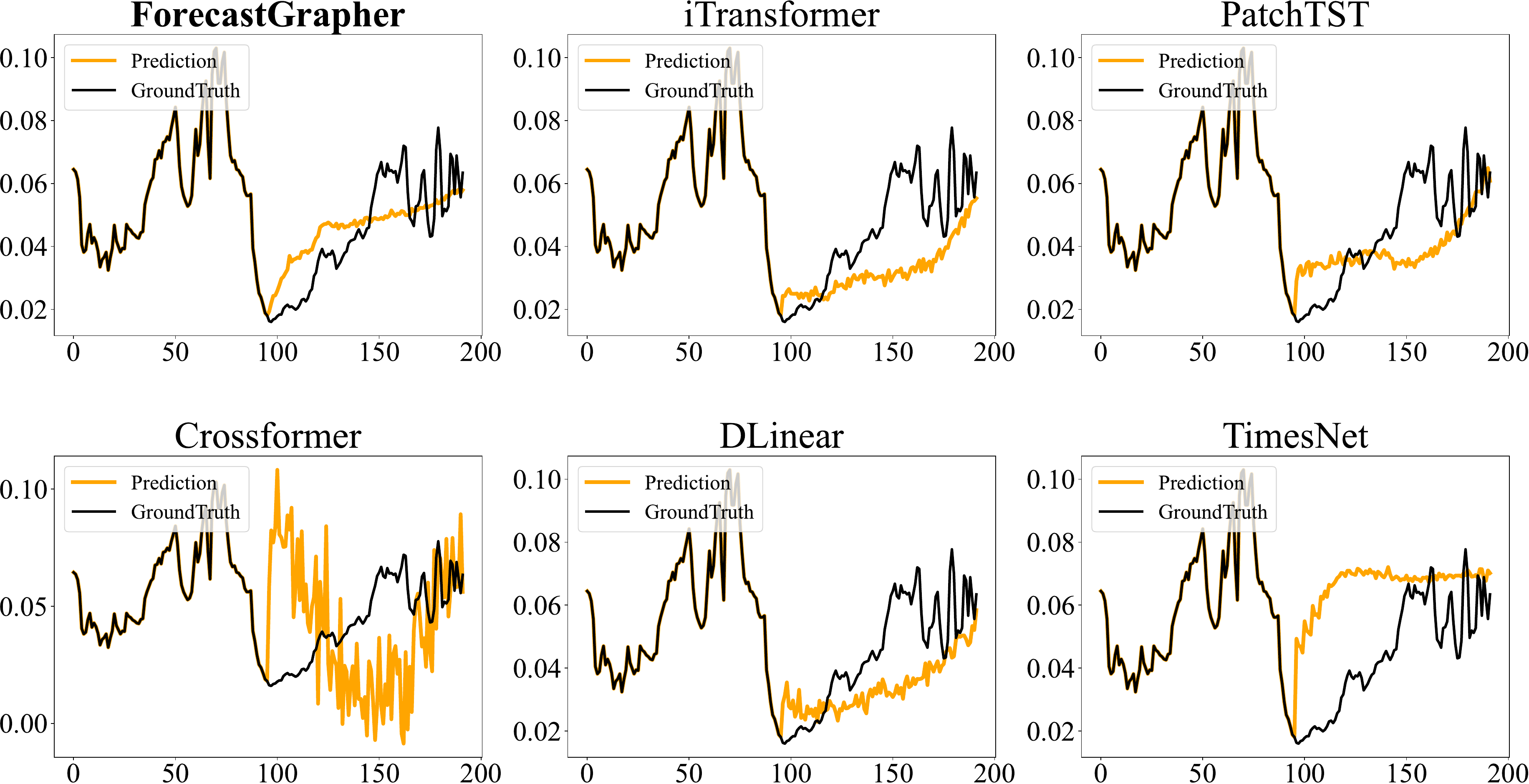}}
\caption{Visualization of input 96 and output 96 prediction results on the Weather dataset.}
\label{fig:weather_appendix}
\end{figure*}

\section{Full Forecasting Results}
\label{Full Forecasting Results}
We offer comprehensive multivariate prediction outcomes in this section. Tables \ref{tab:main_results} encompasses test across 12 benchmark datasets. The outcomes reveal that ForecastGrapher demonstrates outstanding performance across all datasets. Specifically, ForecastGrapher achieves the top spot in terms of MSE and MAE a total of 31 and 26 times, respectively. Table \ref{tab:main_results_gnn} contains comparison results with advanced GNNs and Naive method.

\begin{table*}[th]
\caption{Full multivariate time series prediction results, with input length 96, output lengths in \{12,24,48,96\} for PEMS, \{96,192,336,720\} for others. Use bold to indicate the best result, and underline to indicate the second. The benchmarks are reported from \cite{liu2023itransformer}}.
% \small
\tabcolsep=0.15cm
\renewcommand\arraystretch{1.1}
\centerline{
\resizebox{1\linewidth}{!}{
\begin{tabular}{ccccccccccccccccll}
\toprule
\multicolumn{2}{c}{Models}     & \multicolumn{2}{c}{\textbf{Ours}}                                             & \multicolumn{2}{c}{iTransformer}                                                  & \multicolumn{2}{c}{PatchTST}                                                   & \multicolumn{2}{c}{Crossformer}                                                   & \multicolumn{2}{c}{DLinear}                                                  & \multicolumn{2}{c}{RLinear}                                                & \multicolumn{2}{c}{TimesNet} &      \multicolumn{2}{c}{SCINet}\\ \midrule
\multicolumn{2}{c}{Metric}                                                    & MSE                                   & MAE                                   & MSE                                   & MAE                                   & MSE                                   & MAE                                   & MSE                                   & MAE                                   & MSE                                   & MAE                                & MSE                                   & MAE                                   & MSE           & MAE                         & MSE           &MAE                         \\ %\hline
\toprule
\multicolumn{1}{c|}{}                              & \multicolumn{1}{c|}{96}  & \textbf{0.317} & \textbf{0.357} & 0.334 & 0.368 & {\ul 0.329}    & {\ul 0.367}    & 0.404 & 0.426 & 0.345 & 0.372 & 0.355 & 0.376 & 0.338 & 0.375 & 0.418 & 0.438 \\

\multicolumn{1}{c|}{}                              & \multicolumn{1}{c|}{192} & \textbf{0.365} & \textbf{0.383} & 0.377 & 0.391 & {\ul 0.367}    & {\ul 0.385}    & 0.450 & 0.451 & 0.380 & 0.389 & 0.391 & 0.392 & 0.374 & 0.387 & 0.439 & 0.450 \\

\multicolumn{1}{c|}{}                              & \multicolumn{1}{c|}{336} & \textbf{0.393} & \textbf{0.405} & 0.426 & 0.420 & {\ul 0.399}    & {\ul 0.410}    & 0.532 & 0.515 & 0.413 & 0.413 & 0.424 & 0.415 & 0.410 & 0.411 & 0.490 & 0.485 \\

\multicolumn{1}{c|}{\multirow{-4}{*}{\rotatebox{90}{ETTm1}}}      & \multicolumn{1}{c|}{720} & {\ul 0.456}    & {\ul 0.442}    & 0.491 & 0.459 & \textbf{0.454} & \textbf{0.439} & 0.666 & 0.589 & 0.474 & 0.453 & 0.487 & 0.450 & 0.478 & 0.450 & 0.595 & 0.550 \\ 
\midrule

%2
\multicolumn{1}{c|}{}                              & \multicolumn{1}{c|}{96}  & 
{\ul 0.176}    & \textbf{0.259} & 0.180 & 0.264 & \textbf{0.175} & {\ul 0.259} & 0.287 & 0.366 & 0.193 & 0.292 & 0.182 & 0.265       & 0.187 & 0.267 & 0.286 & 0.377 \\

\multicolumn{1}{c|}{}                              & \multicolumn{1}{c|}{192} & \textbf{0.238} & \textbf{0.300} & 0.250 & 0.309 & {\ul 0.241}    & {\ul 0.302} & 0.414 & 0.492 & 0.284 & 0.362 & 0.246 & 0.304       & 0.249 & 0.309 & 0.399 & 0.445 \\

\multicolumn{1}{c|}{}                              & \multicolumn{1}{c|}{336} & \textbf{0.296} & \textbf{0.338} & 0.311 & 0.348 & {\ul 0.305}    & 0.343       & 0.597 & 0.542 & 0.369 & 0.427 & 0.307 & {\ul 0.342} & 0.321 & 0.351 & 0.637 & 0.591 \\

\multicolumn{1}{c|}{\multirow{-4}{*}{\rotatebox{90}{ETTm2}}}     & \multicolumn{1}{c|}{720} & \textbf{0.395} & \textbf{0.396} & 0.412 & 0.407 & {\ul 0.402}    & 0.400       & 1.730 & 1.042 & 0.554 & 0.522 & 0.407 & {\ul 0.398} & 0.408 & 0.403 & 0.960 & 0.735\\ 
\midrule

%3
\multicolumn{1}{c|}{}                              & \multicolumn{1}{c|}{96}  & \textbf{0.373} & {\ul 0.397} & 0.386 & 0.405 & 0.414 & 0.419 & 0.423 & 0.448 & 0.386 & 0.400 & 0.386       & \textbf{0.395} & {\ul 0.384} & 0.402 & 0.654 & 0.599 \\

\multicolumn{1}{c|}{}                              & \multicolumn{1}{c|}{192} & \textbf{0.424} & {\ul 0.427} & 0.441 & 0.436 & 0.460 & 0.445 & 0.471 & 0.474 & 0.437 & 0.432 & 0.437       & \textbf{0.424} & {\ul 0.436} & 0.429 & 0.719 & 0.631 \\

\multicolumn{1}{c|}{}                              & \multicolumn{1}{c|}{336} & \textbf{0.472} & {\ul 0.448} & 0.487 & 0.458 & 0.501 & 0.466 & 0.570 & 0.546 & 0.481 & 0.459 & {\ul 0.479} & \textbf{0.446} & 0.491       & 0.469 & 0.778 & 0.659 \\

\multicolumn{1}{c|}{\multirow{-4}{*}{\rotatebox{90}{ETTh1}}}       & \multicolumn{1}{c|}{720} & \textbf{0.479} & {\ul 0.475} & 0.503 & 0.491 & 0.500 & 0.488 & 0.653 & 0.621 & 0.519 & 0.516 & {\ul 0.481} & \textbf{0.470} & 0.521       & 0.500 & 0.836 & 0.699 \\ 
\midrule

%4
\multicolumn{1}{c|}{}                              & \multicolumn{1}{c|}{96}  & {\ul 0.294}    & {\ul 0.345} & 0.297 & 0.349 & 0.302 & 0.348 & 0.745 & 0.584 & 0.333 & 0.387 & \textbf{0.288} & \textbf{0.338} & 0.340 & 0.374 & 0.707 & 0.621 \\

\multicolumn{1}{c|}{}                              & \multicolumn{1}{c|}{192} & \textbf{0.367} & {\ul 0.396} & 0.380 & 0.400 & 0.388 & 0.400 & 0.877 & 0.656 & 0.477 & 0.476 & {\ul 0.374}    & \textbf{0.390} & 0.402 & 0.414 & 0.860 & 0.689 \\

\multicolumn{1}{c|}{}                              & \multicolumn{1}{c|}{336} & \textbf{0.407} & {\ul 0.428} & 0.428 & 0.432 & 0.426 & 0.433 & 1.043 & 0.731 & 0.594 & 0.541 & {\ul 0.415}    & \textbf{0.426} & 0.452 & 0.452 & 1.000 & 0.744 \\

\multicolumn{1}{c|}{\multirow{-4}{*}{\rotatebox{90}{ETTh2}}}       & \multicolumn{1}{c|}{720} & \textbf{0.420} & {\ul 0.441} & 0.427 & 0.445 & 0.431 & 0.446 & 1.104 & 0.763 & 0.831 & 0.657 & {\ul 0.420}    & \textbf{0.440} & 0.462 & 0.468 & 1.249 & 0.838 \\ 
\midrule

%5
\multicolumn{1}{c|}{}                              & \multicolumn{1}{c|}{96}  & \textbf{0.138} & \textbf{0.235} & {\ul 0.148} & {\ul 0.240} & 0.181& 0.270& 0.219 & 0.314 & 0.197 & 0.282 & 0.201 & 0.281 & 0.168       & 0.272 & 0.247 & 0.345 \\

\multicolumn{1}{c|}{}                              & \multicolumn{1}{c|}{192} & \textbf{0.154} & \textbf{0.249} & {\ul 0.162} & {\ul 0.253} & 0.188& 0.274& 0.231 & 0.322 & 0.196 & 0.285 & 0.201 & 0.283 & 0.184       & 0.289 & 0.257 & 0.355 \\

\multicolumn{1}{c|}{}                              & \multicolumn{1}{c|}{336} & \textbf{0.169} & \textbf{0.264} & {\ul 0.178} & {\ul 0.269} & 0.204& 0.293& 0.246 & 0.337 & 0.209 & 0.301 & 0.215 & 0.298 & 0.198       & 0.300 & 0.269 & 0.369 \\

\multicolumn{1}{c|}{\multirow{-4}{*}{\rotatebox{90}{Electricity}}}       & \multicolumn{1}{c|}{720} & \textbf{0.199} & \textbf{0.294} & 0.225       & {\ul 0.317} & 0.246& 0.324& 0.280 & 0.363 & 0.245 & 0.333 & 0.257 & 0.331 & {\ul 0.220} & 0.320 & 0.299 & 0.390 \\ 
\midrule

%6
\multicolumn{1}{c|}{}                              & \multicolumn{1}{c|}{96}  & \textbf{0.086} & {\ul 0.206} & {\ul 0.086} & 0.206          & 0.088          & \textbf{0.205} & 0.256 & 0.367 & 0.088          & 0.218       & 0.093 & 0.217 & 0.107 & 0.234 & 0.267 & 0.396 \\

\multicolumn{1}{c|}{}                              & \multicolumn{1}{c|}{192} & 0.181          & 0.303       & 0.177       & {\ul 0.299}    & \textbf{0.176} & \textbf{0.299} & 0.470 & 0.509 & {\ul 0.176}    & 0.315       & 0.184 & 0.307 & 0.226 & 0.344 & 0.351 & 0.459 \\

\multicolumn{1}{c|}{}                              & \multicolumn{1}{c|}{336} & 0.334          & 0.418       & 0.331       & {\ul 0.417}    & \textbf{0.301} & \textbf{0.397} & 1.268 & 0.883 & {\ul 0.313}    & 0.427       & 0.351 & 0.432 & 0.367 & 0.448 & 1.324 & 0.853 \\

\multicolumn{1}{c|}{\multirow{-4}{*}{\rotatebox{90}{Exchange}}}       & \multicolumn{1}{c|}{720} & 0.869          & 0.702       & {\ul 0.847} & \textbf{0.691} & 0.901          & 0.714          & 1.767 & 1.068 & \textbf{0.839} & {\ul 0.695} & 0.886 & 0.714 & 0.964 & 0.746 & 1.058 & 0.797\\ 
\midrule

%7
\multicolumn{1}{c|}{}                              & \multicolumn{1}{c|}{96}  & {\ul 0.423} & {\ul 0.278} & \textbf{0.395} & \textbf{0.268} & 0.462& 0.295& 0.522 & 0.290 & 0.650 & 0.396 & 0.649 & 0.389 & 0.593 & 0.321 & 0.788 & 0.499 \\

\multicolumn{1}{c|}{}                              & \multicolumn{1}{c|}{192} & {\ul 0.445} & {\ul 0.285} & \textbf{0.417} & \textbf{0.276} & 0.466& 0.296& 0.530 & 0.293 & 0.598 & 0.370 & 0.601 & 0.366 & 0.617 & 0.336 & 0.789 & 0.505 \\

\multicolumn{1}{c|}{}                              & \multicolumn{1}{c|}{336} & {\ul 0.460} & {\ul 0.292} & \textbf{0.433} & \textbf{0.283} & 0.482 & 0.304 & 0.558 & 0.305 & 0.605 & 0.373 & 0.609 & 0.369 & 0.629 & 0.336 & 0.797 & 0.508 \\

\multicolumn{1}{c|}{\multirow{-4}{*}{\rotatebox{90}{Traffic}}} & \multicolumn{1}{c|}{720} & {\ul 0.503} & {\ul 0.312} & \textbf{0.467} & \textbf{0.302} & 0.514 & 0.322 & 0.589 & 0.328 & 0.645 & 0.394 & 0.647 & 0.387 & 0.640 & 0.350 & 0.841 & 0.523 \\ \midrule

%8
\multicolumn{1}{c|}{}                              & \multicolumn{1}{c|}{96}  & {\ul 0.161}    & \textbf{0.206} & 0.174 & {\ul 0.214} & 0.177 & 0.218       & \textbf{0.158} & 0.230 & 0.196          & 0.255 & 0.192 & 0.232 & 0.172 & 0.220 & 0.222 & 0.306 \\

\multicolumn{1}{c|}{}                              & \multicolumn{1}{c|}{192} & {\ul 0.209}    & \textbf{0.251} & 0.221 & {\ul 0.254} & 0.225 & 0.259       & \textbf{0.206} & 0.277 & 0.237          & 0.296 & 0.240 & 0.271 & 0.219 & 0.261 & 0.261 & 0.340 \\

\multicolumn{1}{c|}{}                              & \multicolumn{1}{c|}{336} & \textbf{0.268} & \textbf{0.295} & 0.278 & {\ul 0.296} & 0.278 & 0.297       & {\ul 0.272}    & 0.335 & 0.283          & 0.335 & 0.292 & 0.307 & 0.280 & 0.306 & 0.309 & 0.378 \\

\multicolumn{1}{c|}{\multirow{-4}{*}{\rotatebox{90}{Weather}}}    & \multicolumn{1}{c|}{720} & {\ul 0.348}    & \textbf{0.345} & 0.358 & 0.349       & 0.354 & {\ul 0.348} & 0.398          & 0.418 & \textbf{0.345} & 0.381 & 0.364 & 0.353 & 0.365 & 0.359 & 0.377 & 0.427 \\ 
\midrule

%9
\multicolumn{1}{c|}{}                              & \multicolumn{1}{c|}{12
}  & \textbf{0.065} & \textbf{0.168} & 0.071       & 0.174       & 0.099 & 0.216 & 0.090 & 0.203 & 0.122 & 0.243 & 0.126 & 0.236 & 0.085 & 0.192 & {\ul 0.066} & {\ul 0.172} \\

\multicolumn{1}{c|}{}                              & \multicolumn{1}{c|}{24
} & \textbf{0.081} & \textbf{0.186} & 0.093       & 0.201       & 0.142 & 0.259 & 0.121 & 0.240 & 0.201 & 0.317 & 0.246 & 0.334 & 0.118 & 0.223 & {\ul 0.085} & {\ul 0.198} \\

\multicolumn{1}{c|}{}                              & \multicolumn{1}{c|}{48
} & \textbf{0.111} & \textbf{0.220} & {\ul 0.125} & {\ul 0.236} & 0.211 & 0.319 & 0.202 & 0.317 & 0.333 & 0.425 & 0.551 & 0.529 & 0.155 & 0.260 & 0.127       & 0.238       \\

\multicolumn{1}{c|}{\multirow{-4}{*}{\rotatebox{90}{PEMS03}}}    & \multicolumn{1}{c|}{96
} & \textbf{0.134} & \textbf{0.244} & {\ul 0.164} & {\ul 0.275} & 0.269 & 0.370 & 0.262 & 0.367 & 0.457 & 0.515 & 1.057 & 0.787 & 0.228 & 0.317 & 0.178       & 0.287    \\ \midrule

%10
\multicolumn{1}{c|}{}                              & \multicolumn{1}{c|}{12
}  & {\ul 0.075}    & {\ul 0.181}    & 0.078 & 0.183 & 0.105 & 0.224 & 0.098 & 0.218 & 0.148 & 0.272 & 0.138 & 0.252 & 0.087 & 0.195 & \textbf{0.073} & \textbf{0.177} \\

\multicolumn{1}{c|}{}                              & \multicolumn{1}{c|}{24} & {\ul 0.085}    & {\ul 0.194}    & 0.095 & 0.205 & 0.153 & 0.275 & 0.131 & 0.256 & 0.224 & 0.340 & 0.258 & 0.348 & 0.103 & 0.215 & \textbf{0.084} & \textbf{0.193} \\

\multicolumn{1}{c|}{}                              & \multicolumn{1}{c|}{48
} & {\ul 0.099}    & {\ul 0.213}    & 0.120 & 0.233 & 0.229 & 0.339 & 0.205 & 0.326 & 0.355 & 0.437 & 0.572 & 0.544 & 0.136 & 0.250 & \textbf{0.099} & \textbf{0.211} \\
 
\multicolumn{1}{c|}{\multirow{-4}{*}{\rotatebox{90}{PEMS04}}}    & \multicolumn{1}{c|}{96
} & \textbf{0.112} & \textbf{0.227} & 0.150 & 0.262 & 0.291 & 0.389 & 0.402 & 0.457 & 0.452 & 0.504 & 1.137 & 0.820 & 0.190 & 0.303 & {\ul 0.114}    & {\ul 0.227}   
\\ \midrule

%11
\multicolumn{1}{c|}{}                              & \multicolumn{1}{c|}{12
}  & \textbf{0.058} & \textbf{0.152} & {\ul 0.067} & {\ul 0.165} & 0.095 & 0.207 & 0.094 & 0.200 & 0.115 & 0.242 & 0.118 & 0.235 & 0.082 & 0.181 & 0.068 & 0.171       \\

\multicolumn{1}{c|}{}                              & \multicolumn{1}{c|}{24
} & \textbf{0.069} & \textbf{0.163} & {\ul 0.088} & {\ul 0.190} & 0.150 & 0.262 & 0.139 & 0.247 & 0.210 & 0.329 & 0.242 & 0.341 & 0.101 & 0.204 & 0.119 & 0.225       \\

\multicolumn{1}{c|}{}                              & \multicolumn{1}{c|}{48
} & \textbf{0.085} & \textbf{0.179} & {\ul 0.110} & {\ul 0.215} & 0.253 & 0.340 & 0.311 & 0.369 & 0.398 & 0.458 & 0.562 & 0.541 & 0.134 & 0.238 & 0.149 & 0.237       \\

\multicolumn{1}{c|}{\multirow{-4}{*}{\rotatebox{90}{PEMS07}}}    & \multicolumn{1}{c|}{96} & \textbf{0.103} & \textbf{0.194} & {\ul 0.139} & 0.245       & 0.346 & 0.404 & 0.396 & 0.442 & 0.594 & 0.553 & 1.096 & 0.795 & 0.181 & 0.279 & 0.141 & {\ul 0.234}
\\ \midrule

%12
\multicolumn{1}{c|}{}                              & \multicolumn{1}{c|}{12
}  & {\ul 0.081}    & {\ul 0.184}    & \textbf{0.079} & \textbf{0.182} & 0.168 & 0.232 & 0.165 & 0.214 & 0.154 & 0.276 & 0.133 & 0.247 & 0.112 & 0.212 & 0.087 & 0.184 \\

\multicolumn{1}{c|}{}                              & \multicolumn{1}{c|}{24} & \textbf{0.115} & {\ul 0.220}    & {\ul 0.115}    & \textbf{0.219} & 0.224 & 0.281 & 0.215 & 0.260 & 0.248 & 0.353 & 0.249 & 0.343 & 0.141 & 0.238 & 0.122 & 0.221 \\

\multicolumn{1}{c|}{}                              & \multicolumn{1}{c|}{48} & \textbf{0.169} & \textbf{0.211} & {\ul 0.186}    & {\ul 0.235}    & 0.321 & 0.354 & 0.315 & 0.355 & 0.440 & 0.470 & 0.569 & 0.544 & 0.198 & 0.283 & 0.189 & 0.270 \\

\multicolumn{1}{c|}{\multirow{-4}{*}{\rotatebox{90}{PEMS08}}}    & \multicolumn{1}{c|}{96
} & \textbf{0.197} & \textbf{0.234} & {\ul 0.221}    & {\ul 0.267}    & 0.408 & 0.417 & 0.377 & 0.397 & 0.674 & 0.565 & 1.166 & 0.814 & 0.320 & 0.351 & 0.236 & 0.300
\\ \midrule

\multicolumn{2}{c|}{$1^{st}$ Count}   &\textbf{31} &\multicolumn{1}{c|}{\textbf{26}} &   {\ul 5}& \multicolumn{1}{c|}{7}&  4& \multicolumn{1}{c|}{4}  &2  &\multicolumn{1}{c|}{0}  &2  &\multicolumn{1}{c|}{0}  &1   &\multicolumn{1}{c|}{{\ul 8}} &0 &\multicolumn{1}{c|}{0}  &\multicolumn{1}{c}{3}  &\multicolumn{1}{c}{3}
\\
\bottomrule

\end{tabular}
}
}

\label{tab:main_results}
\end{table*}

\begin{table*}[th]
\caption{Full comparison with GNNs and Naive method for multivariate time series prediction, with input length 96, output lengths in \{12,24,48,96\} for PEMS, \{96,192,336,720\} for others. Use bold to indicate the best result and the symbol '-' indicates exceeding memory.}
% \small
\tabcolsep=0.15cm
\renewcommand\arraystretch{1.2}
\centerline{
\resizebox{1\linewidth}{!}{
\begin{tabular}{cccccccccccccccc}
\toprule
\multicolumn{2}{c}{Dataset}     & \multicolumn{2}{c}{Electricity}                                             & \multicolumn{2}{c}{Traffic}                                                  & \multicolumn{2}{c}{Weather}                                                   & \multicolumn{2}{c}{PEMS03}                                                   & \multicolumn{2}{c}{PEMS04}                                                  & \multicolumn{2}{c}{PEMS07}                                                & \multicolumn{2}{c}{PEMS08} \\ \midrule
\multicolumn{2}{c}{Metric}                                                    & MSE                                   & MAE                                   & MSE                                   & MAE                                   & MSE                                   & MAE                                   & MSE                                   & MAE                                   & MSE                                   & MAE                                & MSE                                   & MAE                                   & MSE           & MAE                         \\ %\hline
\toprule
\multicolumn{1}{c|}{}                              & \multicolumn{1}{c|}{96(12)}  
& \textbf{0.138} & \textbf{0.235}
& \textbf{0.423} & \textbf{0.278}
& \textbf{0.161} & \textbf{0.206}
& \textbf{0.065} & \textbf{0.168}
& \textbf{0.075} & \textbf{0.181}
& \textbf{0.058} & \textbf{0.152}
& \textbf{0.081} & \textbf{0.184}
\\

\multicolumn{1}{c|}{}                              & \multicolumn{1}{c|}{192(24)} 
& \textbf{0.154} & \textbf{0.249} 
& \textbf{0.445} & \textbf{0.285} 
& \textbf{0.209} & \textbf{0.251} 
& \textbf{0.081} & \textbf{0.186} 
& \textbf{0.085} & \textbf{0.194} 
& \textbf{0.069} & \textbf{0.163} 
& \textbf{0.115} & \textbf{0.220} 
\\

\multicolumn{1}{c|}{}                              & \multicolumn{1}{c|}{336(48)} 
& \textbf{0.169} & \textbf{0.264} 
& \textbf{0.460} & \textbf{0.292} 
&  0.268 & \textbf{0.295}
& \textbf{0.111} & \textbf{0.220} 
& \textbf{0.099} & \textbf{0.213} 
& \textbf{0.085} & \textbf{0.179} 
& \textbf{0.169} & \textbf{0.211} 
\\

\multicolumn{1}{c|}{\multirow{-4}{*}{\rotatebox{90}{\textbf{Ours}}}}      & \multicolumn{1}{c|}{720(96)} 
& \textbf{0.199} & \textbf{0.294} 
& \textbf{0.503} & \textbf{0.312} 
&  0.348 & \textbf{0.345} 
& \textbf{0.134} & \textbf{0.244} 
& \textbf{0.112} & \textbf{0.227} 
& \textbf{0.103} & \textbf{0.194} 
& \textbf{0.197} & \textbf{0.234} 
\\ 
\midrule

%2
\multicolumn{1}{c|}{}                              & \multicolumn{1}{c|}{96(12)
}  & 
0.211 & 0.307 
& 0.538 & 0.335 
& 0.177 & 0.240 
& 0.087 & 0.202 
& 0.112 & 0.231 
& 0.073 & 0.182 
& 0.143 & 0.263 
\\

\multicolumn{1}{c|}{}                              & \multicolumn{1}{c|}{192(24)
} & 0.214 & 0.312 
& 0.536 & 0.334 
& 0.218 & 0.279 
& 0.120 & 0.240 
& 0.153 & 0.272 
& 0.100 & 0.215 
& 0.210 & 0.320 
\\

\multicolumn{1}{c|}{}                              & \multicolumn{1}{c|}{336(48)
} & 0.227 & 0.325 
& 0.556 & 0.340 
& \textbf{0.265} & 0.318 
& 0.177 & 0.294 
& 0.209 & 0.321 
& 0.140 & 0.258 
& 0.216 & 0.311 
\\

\multicolumn{1}{c|}{\multirow{-4}{*}{\rotatebox{90}{FourierGNN}}}     & \multicolumn{1}{c|}{720(96)} & 0.260 & 0.354 
& 0.597 & 0.358 
& \textbf{0.336} & 0.370 
& 0.218 & 0.333 
& 0.247 & 0.354 
& 0.177 & 0.292 
& 0.294 & 0.356 
\\ 
\midrule

%3
\multicolumn{1}{c|}{}                              & \multicolumn{1}{c|}{96(12)
}  & 0.165 & 0.267 
& 0.576 & 0.339 
& 0.181 & 0.250 
& 0.119 & 0.244 
& 0.144 & 0.276 
& 0.120 & 0.242 
& 0.246 & 0.319 
\\

\multicolumn{1}{c|}{}                              & \multicolumn{1}{c|}{192(24)
} & 0.180 & 0.283 
& 0.593 & 0.345 
& 0.226 & 0.289 
& 0.179 & 0.305 
& 0.188 & 0.317 
& 0.168 & 0.282 
& 0.281 & 0.337 
\\

\multicolumn{1}{c|}{}                              & \multicolumn{1}{c|}{336(48)
} & 0.200 & 0.306 
& 0.624 & 0.366 
& 0.287 & 0.338 
& 0.191 & 0.303 
& 0.234 & 0.342 
& 0.184 & 0.285 
& 0.305 & 0.356 
\\

\multicolumn{1}{c|}{\multirow{-4}{*}{\rotatebox{90}{StemGNN}}}       & \multicolumn{1}{c|}{720(96)} & 0.243 & 0.345 
& 0.655 & 0.373 
& 0.379 & 0.406 
& 0.258 & 0.355 
& 0.303 & 0.396 
& 0.265 & 0.346 
& 0.380 & 0.393 
\\ 
\midrule

%12
\multicolumn{1}{c|}{}                              & \multicolumn{1}{c|}{96(12)
}  & 0.321 & 0.326 
& 1.222 & 0.499 
& 0.290 & 0.284 
& 0.541 & 0.542 
& 0.573 & 0.566 
& 0.581 & 0.559 
& 0.577 & 0.575 
\\

\multicolumn{1}{c|}{}                              & \multicolumn{1}{c|}{192(24)
} & 0.304 & 0.323 
& 1.095 & 0.458 
& 0.331 & 0.311 
& 0.541 & 0.543 
& 0.573 & 0.567 
& 0.582 & 0.560 
& 0.577 & 0.573 
\\

\multicolumn{1}{c|}{}                              & \multicolumn{1}{c|}{336(48)
} & 0.326 & 0.342 
& 1.151 & 0.475 
& 0.392 & 0.350 
& 0.914 & 0.722 
& 0.977 & 0.754 
& 1.003 & 0.746 
& 1.019 & 0.770 
\\

\multicolumn{1}{c|}{\multirow{-4}{*}{\rotatebox{90}{Naive}}}    & \multicolumn{1}{c|}{720(96)} & 0.366 & 0.373 
&- &-& 0.472 & 0.399 
& 1.610 & 1.004 
& 1.742 & 1.053 
& 1.698 & 1.016 
& 1.816 & 1.061 
\\
\bottomrule
\end{tabular}
}
}

\label{tab:main_results_gnn}
\end{table*}

\section{Broader Impact}
\label{Broader Impact}

The proposed model, ForecastGrapher holds some potential impacts in multivariate time series forecasting and machine learning domains. It introduces a GNN framework specifically tailored for multivariate time series forecasting, enhancing the the ability to capture and express complex time series correlations. It achieves state-of-the-art performance on real-world datasets, making it more promising for practical applications like weather and electricity forecasting. Moreover, it serves as a valuable, unified modeling framework for time series correlations, offering a new path for future research. However, predictions about stocks, which involve significant uncertainty, may yield incorrect outcomes, harming the profit of investors.

\end{document}